\newcommand*\widefbox[1]{\fbox{\hspace{1em}#1\hspace{1em}}}
\icmltitlerunning{Minimal Achievable Sufficient Statistic Learning}
\newcommand{\R}{\mathbb{R}}
\newcommand{\Prob}{\mathbb{P}}
\newcommand{\E}{\mathbb{E}}
\newcommand{\Tr}{\text{Tr}}
\def\intd{\mathrm{d}}
\providecommand{\Hm}[1]{\mathscr{H}^{#1}}
\def\trans{{\operatorname{T}}}
\newcommand{\be}{\begin{equation}}
\newcommand{\ee}{\end{equation}}
\def\ba#1\ea{\begin{align}#1\end{align}}
\theoremstyle{definition}
\newtheorem{definition}{Definition}
\theoremstyle{plain}
\newtheorem{theorem}{Theorem}
\newtheorem{lemma}{Lemma}
\begin{document}

\twocolumn[
\icmltitle{Minimal Achievable Sufficient Statistic Learning}



\icmlsetsymbol{equal}{*}

\begin{icmlauthorlist}
\icmlauthor{Milan Cvitkovic}{caltech}
\icmlauthor{G\"unther Koliander}{ari}
\end{icmlauthorlist}

\icmlaffiliation{caltech}{Department of Computing and Mathematical Sciences, California Institute of Technology, Pasadena, California, USA}
\icmlaffiliation{ari}{Acoustics Research Institute, Austrian Academy of Sciences, Vienna, Austria}

\icmlcorrespondingauthor{Milan Cvitkovic}{mcvitkov@caltech.edu}

\icmlkeywords{Machine Learning, Representation Learning, Deep Learning, Minimal Sufficient Statistics, Information Bottleneck, Singular Distributions, Adversarial Examples, Uncertainty Quantification}

\vskip 0.3in
]



\printAffiliationsAndNotice{}  

\begin{abstract}
We introduce Minimal Achievable Sufficient Statistic (MASS) Learning, a training method for machine learning models that attempts to produce minimal sufficient statistics with respect to a class of functions (e.g. deep networks) being optimized over. In deriving MASS Learning, we also introduce Conserved Differential Information (CDI), an information-theoretic quantity that --- unlike standard mutual information --- can be usefully applied to deterministically-dependent continuous random variables like the input and output of a deep network. In a series of experiments, we show that deep networks trained with MASS Learning achieve competitive performance on supervised learning and uncertainty quantification benchmarks.
\end{abstract}

\section{Introduction}
\label{sec:Introduction}
The \emph{representation learning} approach to machine learning focuses on finding a representation $Z$ of an input random variable $X$ that is useful for predicting a random variable $Y$ \cite{goodfellow_deep_2016}.

What makes a representation $Z$ ``useful'' is  much debated, but a common assertion is that $Z$ should be a \emph{minimal sufficient statistic} of $X$ for $Y$\cite{adragni_kofi_p._sufficient_2009, shamir_learning_2010, james_trimming_2017,  achille_information_2018}. That is:
 
\begin{enumerate}
    \item $Z$ should be a \emph{statistic} of $X$.  This means $Z = f(X)$ for some function $f$.
    \item $Z$ should be \emph{sufficient} for $Y$. This means $p(X|Z,Y) = p(X|Z)$.
    \item Given that $Z$ is a sufficient statistic, it should be \emph{minimal} with respect to $X$. This means for any measurable, non-invertible function $g$, $g(Z)$ is no longer sufficient for $Y$.\footnote{This is not the most common phrasing of statistical minimality, but we feel it is more understandable.  For the equivalence of this phrasing and the standard definition see Supplementary Material \ref{supp:MSS}.}
\end{enumerate}

In other words: a minimal sufficient statistic is a random variable $Z$ that tells you everything about $Y$ you could ever care about, but if you do any irreversible processing to $Z$, you are guaranteed to lose some information about $Y$.

Minimal sufficient statistics have a long history in the field of statistics \cite{lehmann_completeness_1950, dynkin_necessary_1951}. But the minimality condition (3, above) is perhaps too strong to be useful in machine learning, since it is a statement about \emph{any} function $g$, rather than about functions in a practical hypothesis class like the class of deep neural networks.

Instead, in this work we consider \emph{minimal achievable sufficient statistics}: sufficient statistics that are minimal among some particular set of functions.

\begin{definition}[Minimal Achievable Sufficient Statistic]
\label{def:MASS}
Let $Z = f(X)$ be a sufficient statistic of $X$ for $Y$.  $Z$ is \emph{minimal achievable} with respect to a set of functions $\mathcal{F}$ if $f \in \mathcal{F}$ and for any Lipschitz continuous, non-invertible function $g$ where $g \circ f \in \mathcal{F}$, $g(Z)$ is no longer sufficient for $Y$.
\end{definition}

\paragraph{Contributions:}
\begin{itemize}
    \item We introduce Conserved Differential Information (CDI), an information-theoretic quantity that, unlike mutual information, is meaningful for deterministically-dependent continuous random variables, such as the input and output of a deep network.
    \item We introduce Minimal Achievable Sufficient Statistic Learning (MASS Learning), a training objective based on CDI for finding minimal achievable sufficient statistics.
    \item We provide empirical evidence that models trained by MASS Learning achieve competitive performance on supervised learning and uncertainty quantification benchmarks.
\end{itemize}

\section{Conserved Differential Information}

Before we present MASS Learning, we need to introduce Conserved Differential Information (CDI), on which MASS Learning is based.

CDI is an information-theoretic quantity that addresses an oft-cited issue in machine learning \cite{bell_information-maximization_1995, amjad_learning_2018,saxe_information_2018, nash_inverting_2018,goldfeld_estimating_2018}, which is that for a continuous random variable $X$ and a continuous, non-constant function $f$, the mutual information $I(X, f(X))$ is infinite. (See Supplementary Material \ref{supp:MIInfinite} for details.)  This makes  $I(X, f(X))$ unsuitable for use in a learning objective when $f$ is, for example, a standard deep network.

The infinitude of $I(X, f(X))$ has been circumvented in prior works by two strategies.  One is discretize $X$ and $f(X)$ \cite{tishby_deep_2015, shwartz-ziv_opening_2017}, though this is controversial \cite{saxe_information_2018}.  Another is to use a random variable $Z$ with distribution $p(Z\vert X)$ as the representation of $X$ rather than using $f(X)$ itself as the representation \cite{alemi_deep_2016, kolchinsky_nonlinear_2017, achille_information_2018}.  In this latter approach, $p(Z\vert X)$ is usually implemented by adding noise to a deep network that takes $X$ as input.

These are both reasonable strategies for avoiding the infinitude of $I(X, f(X))$.  But another approach would be to derive a new information-theoretic quantity that is better suited to this situation.  To that end we present Conserved Differential Information:

\begin{definition} For a continuous random variable $X$ taking values in $\R^d$ and a Lipschitz continuous function $f \colon \R^d \to \R^r$, the \textbf{Conserved Differential Information} (CDI) is
\begin{empheq}[box=\fbox]{align}
C(X, f(X)) := H(f(X)) -  \E_X\left[\log \left( J_f(X) \right)\right]
\end{empheq}
where $H$ denotes  the differential entropy 
\[
H(Z) = - \int p(z) \log p(z) \, \mathrm{d}z 
\] 
and $J_f$ is the Jacobian determinant of $f$ 
\[
J_f(x) = \sqrt{\det \left(\frac{\partial f(x)}{\partial x^\trans}\left(\frac{\partial f(x)}{\partial x^\trans}\right)^\trans\right)}
\] 
with $\frac{\partial f(x)}{\partial x^\trans} \in \R^{r \times d}$ the Jacobian matrix of $f$ at $x$.
\end{definition}

Readers familiar with normalizing flows \cite{rezende_variational_2015} or Real NVP \cite{dinh_density_2016} will note that the Jacobian determinant used in those methods is a special case of the Jacobian determinant in the definition of CDI. 
This is because normalizing flows and Real NVP are based on the change of variables formula for invertible mappings, while CDI is based in part on the more general change of variables formula for non-invertible mappings.  
More details on this connection are given in Supplementary Material  \ref{supp:coarea}.
The mathematical motivation for CDI based on the recent work of Koliander et al. \yrcite{koliander_entropy_2016} is provided in Supplementary Material \ref{supp:CDIDerivation}.  
Figure \ref{fig:CDIExample} gives a visual example of what CDI measures about a function.

\begin{figure}[ht]
\centerline{\includegraphics[width=0.5\textwidth]{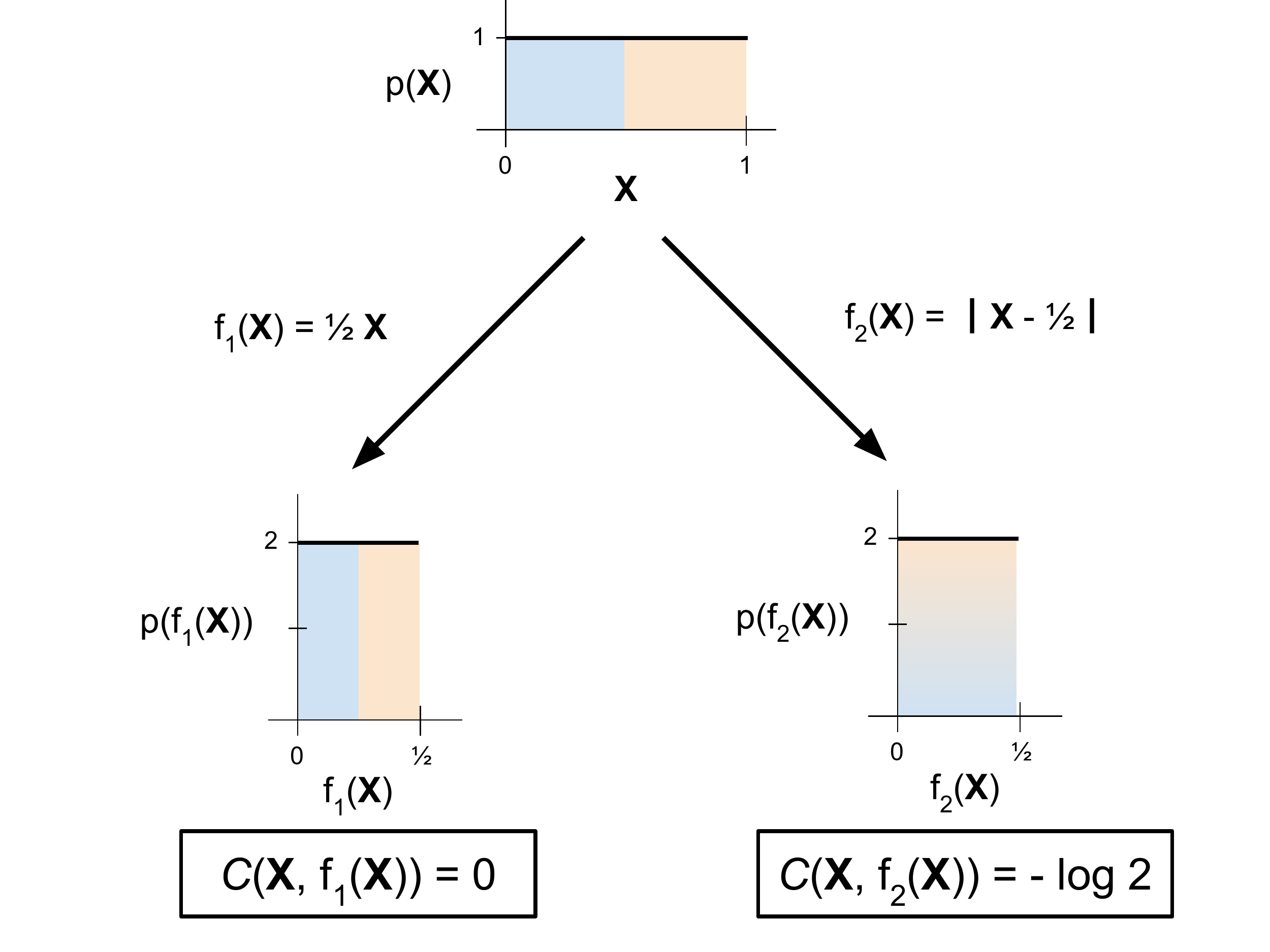}}
\caption{CDI of two functions $f_1$ and $f_2$ of the random variable $X$.  Even though the random variables $f_1(X)$ and $f_2(X)$ have the same distribution, $C(X, f_1(X))$ is different from $C(X, f_2(X))$. This is because $f_1$ is an invertible function, while $f_2$ is not.  CDI quantifies, roughly speaking, ``how non-invertible'' $f_2$ is.}
\label{fig:CDIExample}
\end{figure}

The conserved differential information $C(X,f(X))$ between continuous, deterministically-dependent random variables behaves much like mutual information does between discrete random variables.  For example, when $f$ is invertible, $C(X,f(X)) = H(X)$, just like with the mutual information between discrete random variables.  
Most importantly for our purposes, though, $C(X, f(X))$ obeys the following data processing inequality:

\begin{theorem}[CDI Data Processing Inequality]
\label{thm:dpi}
For Lipschitz continuous functions $f$ and $g$ with the same output space,
\[
C\left(X, f(X)\right) \geq C\left(X, g(f(X))\right)
\]
with equality if and only if $g$ is invertible almost everywhere.
\end{theorem}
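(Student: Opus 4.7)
The plan is to use the chain rule for Jacobian determinants to reduce the claim to a statement purely about $g$ acting on $Y := f(X)$, and then to identify the resulting gap as the entropy of the conditional distribution of $Y$ given $Z := g(Y)$. Since that conditional distribution is supported on the at-most-countable preimage $g^{-1}(Z)$, its entropy is automatically non-negative, and it vanishes exactly when $g$ is a.e.\ injective on the support of $Y$.

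Step 1 (chain rule and reduction). By Rademacher's theorem $f$ and $g$ are differentiable a.e., so the chain rule gives $D(g \circ f)(x) = Dg(f(x))\, Df(x)$. Because $f$ and $g$ share the codomain $\R^r$, the factor $Dg(f(x))$ is a square matrix, and a one-line linear-algebra calculation yields the multiplicativity $J_{g \circ f}(x) = J_g(f(x))\, J_f(x)$. Substituting this into the definitions of $C(X, f(X))$ and $C(X, g(f(X)))$, the term $\E_X[\log J_f(X)]$ cancels on both sides, and the desired inequality reduces to
\[
H(Y) + \E_Y[\log J_g(Y)] \;\geq\; H(Z),
\]
with equality iff $g$ is a.e.\ invertible on the support of $Y$.

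Step 2 (area formula and conditional entropy). Writing $p_Y$ and $p_Z$ for the densities of $Y$ and $Z$, the area formula for Lipschitz $g\colon \R^r \to \R^r$ yields
\[
p_Z(z) \;=\; \sum_{y \in g^{-1}(z)} \frac{p_Y(y)}{J_g(y)}
\]
for $p_Z$-a.e.\ $z$; the critical set where $J_g = 0$ has image of Lebesgue measure zero and can be discarded. Hence the conditional law of $Y$ given $Z = z$ is the discrete distribution on $g^{-1}(z)$ with weights $w(y \mid z) := p_Y(y)/(J_g(y)\, p_Z(z))$. Expanding $-\E_Y[\log w(Y \mid Z)]$ and collecting terms then gives the identity
\[
H(Y) + \E_Y[\log J_g(Y)] - H(Z) \;=\; \E_Z\bigl[H(w(\cdot \mid Z))\bigr] \;\geq\; 0,
\]
where the right-hand side is the expected Shannon entropy of a discrete distribution. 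It vanishes precisely when $w(\cdot \mid z)$ is concentrated on a single point for $p_Z$-a.e.\ $z$, i.e., precisely when $g$ is a.e.\ invertible on the support of $Y$, giving the equality condition.

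The main obstacle is purely technical: making the area-formula disintegration rigorous under the weak hypothesis that $g$ is merely Lipschitz. One needs Rademacher's theorem to obtain a.e.\ differentiability of $g$, the Lipschitz area formula to justify the fiber decomposition and to show that the critical set can be neglected, and integrability bookkeeping for $\log p_Y$, $\log J_g$, and $\log p_Z$ so that the chain of equalities in Step 2 is valid. Once that measure-theoretic scaffolding is erected, both the inequality and the equality case follow immediately from non-negativity of Shannon entropy.
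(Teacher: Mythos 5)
Your proposal is correct and follows essentially the same route as the paper's proof: multiplicativity of the Jacobian determinant (valid because $g$ maps $\R^r$ to $\R^r$), the change-of-variables sum formula $p_Z(z)=\sum_{y\in g^{-1}(z)}p_Y(y)/J_g(y)$, and the observation that the resulting gap is an expectation of $-\log$ of a probability weight, hence non-negative and zero exactly when each fiber carries a single atom. Your reframing of that gap as the expected Shannon entropy of the discrete conditional law of $Y$ given $Z$ is a slightly more conceptual way of stating what the paper expresses directly as $\E\bigl[\log\bigl(\sum_{y'}p_Y(y')/J_g(y')\bigr)-\log\bigl(p_Y(y)/J_g(y)\bigr)\bigr]\ge 0$, but the underlying argument is identical.
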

The proof is in Supplementary Material \ref{supp:CDIProof}.

\section{MASS Learning}
With CDI and its data processing inequality in hand, we can give the following optimization-based characterization of minimal achievable sufficient statistics:

\begin{theorem}
\label{thm:MASS}
Let $X$ be a continuous random variable, $Y$ be a discrete random variable, and $\mathcal{F}$ be any set of Lipschitz continuous functions with a common output space (e.g., different parameter settings of a deep network).
If 
\begin{align*}
f \in &\arg\min_{S \in \mathcal{F}}  \ C(X,S(X)) \\
&s.t. \ \ I(S(X), Y) = \max_{S'} I(S'(X), Y)
\end{align*}
then 
$f(X)$ is a minimal achievable sufficient statistic of $X$ for $Y$ with respect to $\mathcal{F}$. 
\end{theorem}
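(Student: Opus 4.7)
The target has two obligations built into it: show that any minimizer $f$ produces a sufficient statistic, and show that sufficiency fails as soon as one post-composes $f$ with a non-invertible map in $\mathcal{F}$. The plan is to handle sufficiency via the mutual information constraint, and handle minimality by using Theorem~\ref{thm:dpi} (the CDI data processing inequality) in a contradiction argument against the optimality of $f$.

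For the sufficiency step, I would invoke the standard characterization that $S(X)$ is sufficient for $Y$ if and only if $I(S(X),Y) = I(X,Y)$. Since $Y$ is discrete, $I(S(X),Y)$ is well-defined and finite, and the data processing inequality for ordinary mutual information gives $I(S'(X),Y) \le I(X,Y)$ for every measurable $S'$, with equality exactly on the sufficient statistics. Reading the constraint as asserting that $I(f(X),Y)$ equals the unrestricted supremum (which is $I(X,Y)$) then immediately yields $p(X \mid f(X), Y) = p(X \mid f(X))$, i.e.\ sufficiency of $f(X)$. If instead one wants to read the $\max_{S'}$ as ranging only over $\mathcal{F}$, I would add a mild assumption that $\mathcal{F}$ contains at least one sufficient statistic (e.g.\ a near-identity) so that the in-class maximum coincides with $I(X,Y)$; this is implicit in the framing of the paper.

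For the minimality step, let $g$ be Lipschitz continuous and non-invertible with $g\circ f\in\mathcal{F}$, and suppose for contradiction that $g(f(X))$ is still sufficient for $Y$. Then $I(g(f(X)),Y) = I(X,Y) = \max_{S'} I(S'(X),Y)$, so $g\circ f$ is feasible for the optimization problem. Theorem~\ref{thm:dpi} applied to the composition gives
\[
C(X, f(X)) \ge C(X, g(f(X))),
\]
with equality only when $g$ is invertible almost everywhere. Since $g$ is non-invertible, the inequality is strict, which contradicts $f \in \arg\min_{S\in\mathcal{F}} C(X,S(X))$ over the feasible set. Hence $g(f(X))$ cannot be sufficient, which is exactly the definition of $f(X)$ being minimal achievable with respect to $\mathcal{F}$.

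The main obstacle I anticipate is a definitional mismatch rather than a computation: the CDI data processing inequality achieves equality iff $g$ is invertible \emph{almost everywhere}, whereas Definition~\ref{def:MASS} only requires $g$ to be \emph{non-invertible} (not bijective). I would need to argue that for a Lipschitz map $g$ between Euclidean spaces of the relevant dimensions, failing to be invertible (in the sense used in the definition) entails failing to be invertible almost everywhere on the support of $f(X)$, or alternatively tighten Definition~\ref{def:MASS} to match the hypothesis of Theorem~\ref{thm:dpi}. A secondary, smaller point is being explicit about what $\max_{S'}$ ranges over and confirming the sufficiency characterization $I(S(X),Y)=I(X,Y)\Leftrightarrow$ sufficiency in the continuous-$X$, discrete-$Y$ setting; this is classical but worth citing rather than reproving.
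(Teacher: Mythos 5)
Your proposal is correct and follows essentially the same route as the paper: sufficiency via the mutual-information characterization (the paper's Lemma~\ref{lem:SuffStat}) and minimality via a contradiction argument using the strict-inequality case of Theorem~\ref{thm:dpi}. The definitional mismatch you flag (``non-invertible'' in Definition~\ref{def:MASS} versus ``invertible almost everywhere'' in Theorem~\ref{thm:dpi}) is a real subtlety, but the paper's own proof silently makes the same identification, and its Supplementary Material indicates that ``invertible'' throughout is meant in the almost-everywhere sense.
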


\begin{proof}
First note the following lemma \cite{cover_elements_2006}.
\begin{lemma}
\label{lem:SuffStat}
$Z = f(X)$ is a sufficient statistic for a discrete random variable $Y$ if and only if $I(Z, Y) = \max_{S'} I(S'(X), Y)$.
\end{lemma}
Lemma~\ref{lem:SuffStat} guarantees that any $f$ satisfying the conditions in Theorem~\ref{thm:MASS} is sufficient. Suppose such an $f$ was not minimal achievable.  Then by Definition \ref{def:MASS} there would exist a non-invertible, Lipschitz continuous $g$ such that $g(f(X))$ was sufficient.  But by Theorem~\ref{thm:dpi}, it would then also be the case that $C(X,g(f(X))) <  C(X,f(X))$, which would contradict $f$ minimizing $C(X,S(X))$.
\end{proof}

We can turn Theorem \ref{thm:MASS} into a learning objective over functions $f$ by relaxing the strict constraint into a Lagrangian formulation with Lagrange multiplier $1/\beta$ for $\beta > 0$:
\begin{align*}
C(X,f(X)) -  \frac{1}{\beta}I(f(X),Y)
\end{align*}
 The larger the value of $\beta$, the more our objective will encourage minimality over sufficiency. 
We can then simplify this formulation using the identity $I(f(X),Y) = H(Y) - H(Y|f(X))$, which gives us the following optimization objective:
\begin{empheq}[box=\widefbox]{align}
\begin{split}
\mathcal{L}_{MASS}(f) & := H(Y|f(X)) + \beta H(f(X)) \\
& \quad - \beta \E_X[\log J_{f}(X)].
\end{split}
\end{empheq}
We refer to minimizing this objective as \textbf{MASS Learning}.

\subsection{Practical implementation}

In practice, we are interested in using MASS Learning to train a deep network $f_\theta$ with parameters $\theta$ using a finite dataset $\{(x_i,y_i)\}_{i=1}^N $ of $N$ datapoints sampled from the joint distribution $p(x,y)$ of $X$ and $Y$.  To do this, we introduce a parameterized variational approximation $q_\phi(f_\theta(x) | y) \approx p(f_{\theta}(x)|y)$.  Using $q_\phi$, we minimize the following empirical upper bound to $\mathcal{L}_{MASS}$:
\begin{align*}
\mathcal{L}_{MASS} \leq \widehat{\mathcal{L}}_{MASS}(\theta, \phi) := \frac{1}{N} & \sum_{i=1}^N - \log q_\phi(y_i|f_\theta(x_i)) \\
& - \beta \log q_\phi(f_\theta(x_i)) \\
& -  \beta  \log J_{f_\theta}(x_i),
\end{align*}
where the quantity $q_\phi(f_\theta(x_i))$ is computed as $\sum_y q_\phi(f_\theta(x_i)|y)p(y)$ and the quantity $q_\phi(y_i|f_\theta(x_i))$ is computed with Bayes rule as $\frac{q_\phi(f_\theta(x_i)|y_i)p(y_i)}{\sum_y q_\phi(f_\theta(x_i)|y)p(y)}$.
When $Y$ is discrete and takes on finitely many values, as in classification problems, and when we choose a variational distribution $q_\phi$ that is differentiable with respect to $\phi$ (e.g. a multivariate Gaussian), then we can minimize $\widehat{\mathcal{L}}_{MASS}(\theta, \phi)$ using stochastic gradient descent (SGD).

To perform classification using our trained network, we use the learned variational distribution $q_\phi$ and Bayes rule:
\begin{align*}
p(y_i|x_i) \approx p(y_i|f_\theta(x_i)) \approx \frac{q_\phi(f_\theta(x_i)|y_i)p(y_i)}{\sum_y q_\phi(f_\theta(x_i)|y)p(y)}.
\end{align*}

Computing the $J_{f_\theta}$ term in $\widehat{\mathcal{L}}_{MASS}$ for every sample in an SGD minibatch is too expensive to be practical.  For $f_\theta \colon \R^d \to \R^r$, doing so would require on the order of $r$ times more operations than in standard training of deep networks by, since computing the $J_{f_\theta}$ term involves computing the full Jacobian matrix of the network, which, in our implementation, involves performing $r$ backpropagations.  Thus to make training tractable, we use a subsampling strategy: we estimate the $J_{f_\theta}$ term using only a $1/r$ fraction of the datapoints in a minibatch.  In practice, we have found this subsampling strategy to not noticeably alter the numerical value of the $J_{f_\theta}$ term during training.  

Subsampling for the $J_{f_\theta}$ term results in a significant training speedup, but it must nevertheless be emphasized that, even with subsampling, our implementation of MASS Learning is roughly eight times as slow as standard deep network training.  (Unless $\beta=0$, in which case the speed is the same.)  This is by far the most significant drawback of (our implementation of) MASS Learning.  There are many easier-to-compute upper bounds or estimates of $J_{f_\theta}$ that one could use to make MASS Learning faster, and one could also potentially find non-invertible network architectures which admit more efficiently computable Jacobians, but we do not explore these options in this work.

\section{Related Work}

\subsection{Connection to the Information Bottleneck}
The well-studied Information Bottleneck learning method \cite{tishby_information_2000, tishby_deep_2015, strouse_deterministic_2016, alemi_deep_2016, saxe_information_2018, amjad_learning_2018, goldfeld_estimating_2018, kolchinsky_caveats_2018, achille_information_2018, achille_emergence_2018} is based on minimizing the Information Bottleneck Lagrangian
\begin{equation*}
    \mathcal{L}_{IB}(Z) := \beta I(X,Z) - I(Y,Z)
\end{equation*} for $\beta > 0$, where $Z$ is the representation whose conditional distribution $p(Z|X)$ one is trying to learn.

The $\mathcal{L}_{IB}$ learning objective can be motivated based on pure information-theoretic elegance.  But some works like \cite{shamir_learning_2010} also point out the connection between the $\mathcal{L}_{IB}$ objective and minimal sufficient statistics, which is based on the following theorem:
\begin{theorem}\label{thm:DiscreteIBMSS}
Let $X$ be a discrete random variable drawn according to a distribution $p(X|Y)$ determined by the discrete random variable $Y$.  Let $\mathcal{F}$ be the set of deterministic functions of $X$ to any target space.  Then $f(X)$ is a minimal sufficient statistic of $X$ for $Y$ if and only if
\begin{align*}
f \in &\arg\min_{S \in \mathcal{F}}  \ I(X,S(X)) \\
&s.t. \ \ I(S(X), Y) = \max_{S' \in \mathcal{F}} I(S'(X), Y).
\end{align*}
\end{theorem}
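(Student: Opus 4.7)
The plan is to mimic the structure of the proof of Theorem~\ref{thm:MASS}, replacing the CDI data processing inequality by its classical, discrete analogue. The key simplification in the discrete deterministic setting is that for any function $S$, we have $I(X,S(X)) = H(S(X)) - H(S(X)|X) = H(S(X))$, because $S(X)$ is determined by $X$. Hence minimizing $I(X,S(X))$ over deterministic $S$ is the same as minimizing $H(S(X))$. I will use the well-known fact that for discrete random variables and any function $g$, $H(g(Z)) \leq H(Z)$, with strict inequality whenever $g$ is not injective on the support of $Z$.

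For the ($\Leftarrow$) direction I will first invoke Lemma~\ref{lem:SuffStat} to conclude that any $f$ satisfying the constraint is a sufficient statistic. Suppose, for contradiction, that such an $f$ is not minimal sufficient in the sense of Definition~\ref{def:MASS} (using the phrasing from the Introduction, whose equivalence to the usual definition is asserted in the supplementary material). Then there is a non-invertible $g$ with $g(f(X))$ still sufficient. Sufficiency plus Lemma~\ref{lem:SuffStat} gives that $g \circ f$ satisfies the constraint too, while the strict discrete DPI yields $H(g(f(X))) < H(f(X))$, i.e., $I(X,g(f(X))) < I(X,f(X))$, contradicting the minimality of $f$ in the optimization. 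This step is essentially identical to the proof of Theorem~\ref{thm:MASS}.

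For the ($\Rightarrow$) direction I will use the classical characterization of a minimal sufficient statistic: if $f(X)$ is a minimal sufficient statistic and $S(X)$ is any sufficient statistic, then there exists a measurable $h$ with $f(X) = h(S(X))$ almost surely, i.e., $f$ factors through every sufficient statistic. (This is the standard textbook definition, which by assumption is equivalent to the Introduction's phrasing.) Applying discrete DPI to $h$ gives $H(f(X)) = H(h(S(X))) \leq H(S(X))$, hence $I(X,f(X)) \leq I(X,S(X))$ for every sufficient $S \in \mathcal{F}$. Combined with sufficiency of $f$ (which gives the constraint via Lemma~\ref{lem:SuffStat}), this shows that $f$ attains the constrained minimum, completing the equivalence.

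The main potential obstacle is purely bookkeeping around Definition~\ref{def:MASS} versus the classical definition of minimal sufficiency: the ($\Rightarrow$) direction really needs the ``$f$ factors through every sufficient statistic'' property, which is not directly the form given in the Introduction. I would invoke the equivalence stated in Supplementary Material~\ref{supp:MSS} rather than re-deriving it, so the proof reduces cleanly to the two discrete-DPI arguments above.
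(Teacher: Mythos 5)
Your proof is correct. Note that the paper never actually proves Theorem~\ref{thm:DiscreteIBMSS} itself (it is cited from prior work), but both halves of your argument coincide with the arguments the paper does give elsewhere: your ($\Rightarrow$) direction is exactly the forward-direction proof in Supplementary Material~\ref{supp:IBReverseDirection} (classical minimality gives $f = h\circ S$ almost everywhere, then the data processing inequality), and your ($\Leftarrow$) direction mirrors the proof of Theorem~\ref{thm:MASS} with the strict discrete data processing inequality, via the identity $I(X,S(X))=H(S(X))$ for deterministic $S$ of discrete $X$, playing the role of the CDI inequality.
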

The $\mathcal{L}_{IB}$ objective can then be thought of as a Lagrangian relaxation of the optimization problem in this theorem.

Theorem \ref{thm:DiscreteIBMSS} only holds for discrete random variables.  For continuous $X$ it holds only in the reverse direction, so minimizing $\mathcal{L}_{IB}$ for continuous $X$ has no formal connection to finding minimal sufficient statistics, not to mention minimal achievable sufficient statistics.  See Supplementary Material \ref{supp:IBReverseDirection} for details.

Nevertheless, the optimization problems in Theorem \ref{thm:MASS} and Theorem \ref{thm:DiscreteIBMSS} are extremely similar, relying as they both do on Lemma \ref{lem:SuffStat} for their proofs.  And the idea of relaxing the optimization problem in Theorem \ref{thm:MASS} into a Lagrangian formulation to get $\mathcal{L}_{MASS}$ is directly inspired by the Information Bottleneck.  So while MASS Learning and Information Bottleneck learning entail different network architectures and loss functions, there is an Information Bottleneck flavor to MASS Learning.

\subsection{Jacobian Regularization}

The presence of the $J_{f_\theta}$ term in $\widehat{\mathcal{L}}_{MASS}$ is reminiscent of the contrastive autoencoder \cite{rifai_higher_2011} and Jacobian Regularization literature \cite{sokolic_robust_2017, ross_improving_2017, varga_gradient_2017, novak_sensitivity_2018, jakubovitz_improving_2018}.  Both these literatures suggest that minimizing $\E_X[\Vert D_f(X) \Vert_F]$, where $D_f(x) = \frac{\partial f(x)}{\partial x^\trans} \in  \R^{r\times d}$ is the Jacobian matrix, seems to improve generalization and adversarial robustness.  

This may seem paradoxical at first, since by applying the AM-GM inequality to the eigenvalues of $D_f(x)D_f(x)^\trans$ we have
\begin{align*}
\E_X[\Vert D_f(X) \Vert_F^{2r}] &=
\E_X[\Tr(D_f(X)D_f(X)^\trans)^{r}] \\ &\geq 
\E_X[r^r\det(D_f(X)D_f(X)^\trans)] \\ &= 
\E_X[r^r J_f(X)^2] \\ &\geq 
\log \E_X[r^r J_f(X)^2] \\ &\geq 
2 \E_X[\log J_f(X)] +r \log r
\end{align*}
and $\E_X[\log J_f(X)]$ is being \emph{maximized} by $\widehat{\mathcal{L}}_{MASS}$.  So $\widehat{\mathcal{L}}_{MASS}$ might seem to be optimizing for worse generalization according to the Jacobian regularization literature.  However, the entropy term in $\widehat{\mathcal{L}}_{MASS}$ strongly encourages minimizing $\E_X[\Vert D_f(X) \Vert_F]$.  So overall $\widehat{\mathcal{L}}_{MASS}$ seems to be seeking the right balance of sensitivity (dependent on the value of $\beta$) in the network to its inputs, which is precisely in alignment with what the Jacobian regularization literature suggests.

\section{Experiments}

In this section we compare MASS Learning to other approaches for training deep networks.  Code to reproduce all experiments is available online.\footnote{\url{https://github.com/mwcvitkovic/MASS-Learning}}  Full details on all experiments is in Supplementary Material \ref{supp:ExperimentDetails}.

We use the abbreviation ``SoftmaxCE'' to refer to the standard approach of training deep networks for classification problems by minimizing the softmax cross entropy loss
\[
\widehat{\mathcal{L}}_{SoftmaxCE}(\theta) := - \frac{1}{N} \sum_{i=1}^N \Big( \log \texttt{softmax}(f_\theta(x_i))_{y_i} \Big)
\]
where $\texttt{softmax}(f_\theta(x_i))_{y_i}$ is the $y_i$th element of the softmax function applied to the outputs $f_\theta(x_i)$ of the network's last linear layer.  As usual, $\texttt{softmax}(f_\theta(x_i))_{y_i}$ is taken to be the network's estimate of $p(y_i|x_i)$.

We also compare against the Variational Information Bottleneck method \cite{alemi_deep_2016} for representation learning, which we abbreviate as ``VIB''.

We use two networks in our experiments.  ``SmallMLP'' is a feedforward network with two fully-connected layers of 400 and 200 hidden units, respectively, both with \texttt{elu} nonlinearities \cite{clevert_fast_2015}.  ``ResNet20'' is the 20-layer residual network of He et al. \yrcite{he_deep_2015}.

We performed all experiments on the CIFAR-10 dataset \cite{Krizhevsky2009LearningML} and implemented all experiments using PyTorch \cite{paszke_automatic_2017}.

\subsection{Classification Accuracy and Regularization}

We first confirm that networks trained by MASS Learning can make accurate predictions in supervised learning tasks.  We compare the classification accuracy of networks trained on varying amounts of data to see the extent to which MASS Learning regularizes networks.

Classification accuracies for the SmallMLP network are shown in Table \ref{tab:CIFAR-10AccuracySmallMLP}, and for the ResNet20 network in Table \ref{tab:CIFAR-10AccuracyResNet20}.  For the SmallMLP network, MASS Learning performs slightly worse than SoftmaxCE and VIB training.  For the larger ResNet20 network, MASS Learning performs equivalently to the other methods.  It is notable that with the ResNet20 network VIB and MASS Learning both perform well when $\beta = 0$, and neither perform significantly better than SoftmaxCE.  This may be because the hyperparameters used in training the ResNet20 network, which were taken directly from the original paper \cite{he_deep_2015}, are specifically tuned for SoftmaxCE training and are more sensitive to the specifics of the network architecture  than to the loss function.

\begin{table}
\caption{Test-set classification accuracy (percent) on CIFAR-10 dataset using the SmallMLP network trained by various methods.  Full experiment details are in Supplementary Material \ref{supp:ExperimentDetails}.  Values are the mean classification accuracy over 4 training runs with different random seeds, plus or minus the standard deviation. Emboldened accuracies are those for which the maximum observed mean accuracy in the column was within one standard deviation. WD is weight decay; D is dropout.}
\label{tab:CIFAR-10AccuracySmallMLP}
\vskip 0.15in
\begin{center}
\begin{small}
\begin{tabular}{@{}l|ccc@{}}
\toprule
\multicolumn{1}{c|}{\multirow{2}{*}{\sc{Method}}} & \multicolumn{3}{c}{\sc{Training Set Size}} \\
\multicolumn{1}{c|}{} & \multicolumn{1}{c}{\sc{2500}} & \sc{10,000} & \sc{40,000} \\ \midrule

SoftmaxCE	&   $ 34.2 \pm 0.8 $	&   $ \mathbf{44.6 \pm 0.6} $	&   $ 52.7 \pm 0.4 $	\\
SoftmaxCE, WD	&   $ 23.9 \pm 0.9 $	&   $ 36.4 \pm 0.9 $	&   $ 48.1 \pm 0.1 $	\\
SoftmaxCE, D	&   $ 33.7 \pm 1.1 $	&   $ 44.1 \pm 0.6 $	&   $ 53.7 \pm 0.3 $	\\
VIB, $\beta$=\num{1e-1}	&   $ 32.2 \pm 0.6 $	&   $ 40.6 \pm 0.4 $	&   $ 46.1 \pm 0.5 $	\\
VIB, $\beta$=\num{1e-2}	&   $ 34.6 \pm 0.4 $	&   $ 43.8 \pm 0.8 $	&   $ 51.9 \pm 0.8 $	\\
VIB, $\beta$=\num{1e-3}	&   $ \mathbf{35.6 \pm 0.5} $	&   $ \mathbf{44.6 \pm 0.6} $	&   $ 51.8 \pm 0.8 $	\\
VIB, $\beta$=\num{1e-1}, D	&   $ 29.0 \pm 0.6 $	&   $ 40.1 \pm 0.5 $	&   $ 49.5 \pm 0.5 $	\\
VIB, $\beta$=\num{1e-2}, D	&   $ 32.5 \pm 0.9 $	&   $ 43.9 \pm 0.3 $	&   $ 53.6 \pm 0.3 $	\\
VIB, $\beta$=\num{1e-3}, D	&   $ 34.5 \pm 1.0 $	&   $ \mathbf{44.4 \pm 0.4} $	&   $ \mathbf{54.3 \pm 0.2} $	\\
MASS, $\beta$=\num{1e-2}	&   $ 29.6 \pm 0.4 $	&   $ 39.9 \pm 1.2 $	&   $ 46.3 \pm 1.2 $	\\
MASS, $\beta$=\num{1e-3}	&   $ 32.7 \pm 0.8 $	&   $ 41.5 \pm 0.7 $	&   $ 47.8 \pm 0.8 $	\\
MASS, $\beta$=\num{1e-4}	&   $ 34.0 \pm 0.3 $	&   $ 41.5 \pm 1.1 $	&   $ 47.9 \pm 0.8 $	\\
MASS, $\beta$=\num{0}	&   $ 34.1 \pm 0.6 $	&   $ 42.0 \pm 0.6 $	&   $ 48.2 \pm 0.9 $	\\
MASS, $\beta$=\num{1e-2}, D	&   $ 29.3 \pm 1.2 $	&   $ 41.7 \pm 0.4 $	&   $ 52.0 \pm 0.6 $	\\
MASS, $\beta$=\num{1e-3}, D	&   $ 31.5 \pm 0.6 $	&   $ 43.7 \pm 0.2 $	&   $ 53.1 \pm 0.4 $	\\
MASS, $\beta$=\num{1e-4}, D	&   $ 32.7 \pm 0.8 $	&   $ 43.4 \pm 0.5 $	&   $ 53.2 \pm 0.1 $	\\
MASS, $\beta$=\num{0}, D	&   $ 32.2 \pm 1.1 $	&   $ 43.9 \pm 0.4 $	&   $ 52.7 \pm 0.0 $	\\

\bottomrule
\end{tabular}
\end{small}
\end{center}
\vskip -0.1in
\end{table}

\begin{table}
\caption{Test-set classification accuracy (percent) on CIFAR-10 dataset using the ResNet20 network trained by various methods.  No data augmentation was used --- full details in Supplementary Material \ref{supp:ExperimentDetails}.  Values are the mean classification accuracy over 4 training runs with different random seeds, plus or minus the standard deviation. Emboldened accuracies are those for which the maximum observed mean accuracy in the column was within one standard deviation.}
\label{tab:CIFAR-10AccuracyResNet20}
\vskip 0.15in
\begin{center}
\begin{small}
\begin{tabular}{@{}l|ccc@{}}
\toprule
\multicolumn{1}{c|}{\multirow{2}{*}{\sc{Method}}} & \multicolumn{3}{c}{\sc{Training Set Size}} \\
\multicolumn{1}{c|}{} & \multicolumn{1}{c}{\sc{2500}} & \sc{10,000} & \sc{40,000} \\ \midrule

SoftmaxCE	&   $ \mathbf{50.0 \pm 0.7} $	&   $ \mathbf{67.5 \pm 0.8} $	&   $ \mathbf{81.7 \pm 0.3} $	\\
VIB, $\beta$=\num{1e-3}	&   $ \mathbf{49.5 \pm 1.1} $	&   $ \mathbf{66.9 \pm 1.0} $	&   $ 81.0 \pm 0.3 $	\\
VIB, $\beta$=\num{1e-4}	&   $ 49.4 \pm 1.0 $	&   $ 66.4 \pm 0.5 $	&   $ 81.2 \pm 0.4 $	\\
VIB, $\beta$=\num{1e-5}	&   $ \mathbf{50.0 \pm 1.1} $	&   $ \mathbf{67.9 \pm 0.8} $	&   $ 80.9 \pm 0.5 $	\\
VIB, $\beta$=\num{0}	&   $ \mathbf{50.6 \pm 0.8} $	&   $ \mathbf{67.1 \pm 1.0} $	&   $ \mathbf{81.5 \pm 0.2} $	\\
MASS, $\beta$=\num{1e-3}	&   $ 38.2 \pm 0.7 $	&   $ 59.6 \pm 0.8 $	&   $ 75.8 \pm 0.5 $	\\
MASS, $\beta$=\num{1e-4}	&   $ \mathbf{49.9 \pm 1.0} $	&   $ 66.6 \pm 0.4 $	&   $ 80.6 \pm 0.5 $	\\
MASS, $\beta$=\num{1e-5}	&   $ \mathbf{50.1 \pm 0.5} $	&   $ \mathbf{67.4 \pm 1.0} $	&   $ \mathbf{81.6 \pm 0.4} $	\\
MASS, $\beta$=\num{0}	&   $ \mathbf{50.2 \pm 1.0} $	&   $ \mathbf{67.4 \pm 0.3} $	&   $ \mathbf{81.5 \pm 0.2} $	\\

\bottomrule
\end{tabular}
\end{small}
\end{center}
\vskip -0.1in
\end{table}

\subsection{Uncertainty Quantification}

We also evaluate the ability of networks trained by MASS Learning to properly quantify their uncertainty about their predictions.  We assess uncertainty quantification in two ways: using proper scoring rules \cite{lakshminarayanan_simple_2016}, which are scalar measures of how well a network's predictive distribution $p(y|f_\theta(x))$ is calibrated, and by assessing performance on an out-of-distribution (OOD) detection task.

Tables \ref{tab:SmallMLPUQ-40k} through \ref{tab:ResNetUQ-2.5k} show the uncertainty quantification performance of networks according to two proper scoring rules: the Negative Log Likelihood (NLL) and the Brier Score.  The entropy and test accuracy of the predictive distributions are also given, for reference.

For the SmallMLP network in Tables \ref{tab:SmallMLPUQ-40k}, \ref{tab:SmallMLPUQ-10k}, and \ref{tab:SmallMLPUQ-2.5k}, VIB provides the best combination of high accuracy and low NLL and Brier score across all sizes of training set, despite SoftmaxCE with weight decay achieving the best scoring rule values.  For the larger ResNet20 network in Tables \ref{tab:ResNetUQ-40k} and \ref{tab:ResNetUQ-10k}, MASS Learning provides the best combination of accuracy and proper scoring rule performance, though its performance falters when trained on only 2,500 datapoints in Table and \ref{tab:ResNetUQ-2.5k}.  These ResNet20 UQ results also  show the trend that MASS Learning with larger $\beta$ leads to better calibrated network predictions.   Thus, as measured by proper scoring rules, MASS Learning can significantly improve the calibration of a network's predictions while maintaining the same accuracy.

Tables \ref{tab:SmallMLP-OOD-40k} through \ref{tab:ResNet-OOD-2.5k} show metrics for performance on an OOD detection task where the network predicts not just the class of the input image, but whether the image is from its training distribution (CIFAR-10 images) or from another distribution (SVHN images \cite{Netzer2011ReadingDI}).  Following Hendrycks \& Gimpel \yrcite{hendrycks_baseline_2016} and Alemi et al. \yrcite{alemi_uncertainty_2018}, the metrics we report for this task are the Area under the ROC curve (AUROC) and Average Precision score (APR).  APR depends on whether the network is tasked with identifying in-distribution or out-of-distribution images; we report values for both cases as APR In and APR Out, respectively. 

There are different detection methods that networks can use to identify OOD inputs.  One way, applicable to all training methods, is to use the entropy of the predictive distribution $p(y|f_\theta(x))$: larger entropy suggests the input is OOD.  For networks trained by MASS Learning, the variational distribution $q_\phi(f_\theta(x) | y)$ is a natural OOD detector: a small value of $\max_i q_\phi(f_\theta(x) | y_i)$ suggests the input is OOD.  For networks trained by SoftmaxCE, a distribution $q_\phi(f_\theta(x) | y)$ can be learned by MLE on the training set and used to detect OOD inputs in the same way.

For both the SmallMLP network in Tables \ref{tab:SmallMLP-OOD-40k}, \ref{tab:SmallMLP-OOD-10k}, and \ref{tab:SmallMLP-OOD-2.5k} and the ResNet20 network in Tables \ref{tab:ResNet-OOD-40k}, \ref{tab:ResNet-OOD-10k}, and \ref{tab:ResNet-OOD-2.5k}, MASS Learning performs comparably or better than SoftmaxCE and VIB.  However, one should note that MASS Learning with $\beta=0$ gives performance not significantly different to MASS Learning with $\beta \neq 0$ on these OOD tasks, which suggests that the good performance of MASS Learning may be due to its use of a variational distribution to produce predictions, rather than to the overall MASS Learning training scheme.

\begin{table*}
\caption{Uncertainty quantification metrics (proper scoring rules) on CIFAR-10 using the SmallMLP network trained on 40,000 datapoints.  Test Accuracy and Entropy of the network's predictive distribution are given for reference. Full experiment details are in Supplementary Material \ref{supp:ExperimentDetails}.  Values are the mean over 4 training runs with different random seeds, plus or minus the standard deviation. Emboldened values are those for which the minimum observed mean value in the column was within one standard deviation.   WD is weight decay; D is dropout. Lower values are better.}
\label{tab:SmallMLPUQ-40k}
\vskip 0.15in
\begin{center}
\begin{small}
\begin{tabular}{@{}lll|ll@{}}
\toprule
Method & Test Accuracy & Entropy & NLL & Brier Score  \\
\midrule

SoftmaxCE	&   $ 52.7 \pm 0.4 $	&   $ 0.211 \pm 0.003 $	&   $ 4.56 \pm 0.07 $	&   $ 0.0840 \pm 0.0005 $	\\
SoftmaxCE, WD	&   $ 48.1 \pm 0.1 $	&   $ 1.500 \pm 0.009 $	&   $ \mathbf{1.47 \pm 0.01} $	&   $ \mathbf{0.0660 \pm 0.0003} $	\\
SoftmaxCE, D	&   $ 53.7 \pm 0.3 $	&   $ 0.606 \pm 0.005 $	&   $ 1.79 \pm 0.02 $	&   $ 0.0681 \pm 0.0005 $	\\
VIB, $\beta$=\num{1e-1}	&   $ 46.1 \pm 0.5 $	&   $ 0.258 \pm 0.005 $	&   $ 5.35 \pm 0.15 $	&   $ 0.0944 \pm 0.0009 $	\\
VIB, $\beta$=\num{1e-2}	&   $ 51.9 \pm 0.8 $	&   $ 0.193 \pm 0.004 $	&   $ 5.03 \pm 0.19 $	&   $ 0.0861 \pm 0.0015 $	\\
VIB, $\beta$=\num{1e-3}	&   $ 51.8 \pm 0.8 $	&   $ 0.174 \pm 0.003 $	&   $ 5.49 \pm 0.20 $	&   $ 0.0866 \pm 0.0015 $	\\
VIB, $\beta$=\num{1e-1}, D	&   $ 49.5 \pm 0.5 $	&   $ 0.957 \pm 0.005 $	&   $ 1.62 \pm 0.01 $	&   $ \mathbf{0.0660 \pm 0.0003} $	\\
VIB, $\beta$=\num{1e-2}, D	&   $ 53.6 \pm 0.3 $	&   $ 0.672 \pm 0.014 $	&   $ 1.69 \pm 0.01 $	&   $ 0.0668 \pm 0.0006 $	\\
VIB, $\beta$=\num{1e-3}, D	&   $ 54.3 \pm 0.2 $	&   $ 0.617 \pm 0.007 $	&   $ 1.75 \pm 0.02 $	&   $ 0.0677 \pm 0.0005 $	\\
MASS, $\beta$=\num{1e-2}	&   $ 46.3 \pm 1.2 $	&   $ 0.203 \pm 0.005 $	&   $ 6.89 \pm 0.16 $	&   $ 0.0968 \pm 0.0024 $	\\
MASS, $\beta$=\num{1e-3}	&   $ 47.8 \pm 0.8 $	&   $ 0.207 \pm 0.004 $	&   $ 5.89 \pm 0.21 $	&   $ 0.0935 \pm 0.0017 $	\\
MASS, $\beta$=\num{1e-4}	&   $ 47.9 \pm 0.8 $	&   $ 0.212 \pm 0.003 $	&   $ 5.71 \pm 0.16 $	&   $ 0.0934 \pm 0.0017 $	\\
MASS, $\beta$=\num{0}	&   $ 48.2 \pm 0.9 $	&   $ 0.208 \pm 0.004 $	&   $ 5.74 \pm 0.20 $	&   $ 0.0927 \pm 0.0017 $	\\
MASS, $\beta$=\num{1e-2}, D	&   $ 52.0 \pm 0.6 $	&   $ 0.690 \pm 0.013 $	&   $ 1.85 \pm 0.03 $	&   $ 0.0694 \pm 0.0005 $	\\
MASS, $\beta$=\num{1e-3}, D	&   $ 53.1 \pm 0.4 $	&   $ 0.649 \pm 0.010 $	&   $ 1.82 \pm 0.04 $	&   $ 0.0684 \pm 0.0007 $	\\
MASS, $\beta$=\num{1e-4}, D	&   $ 53.2 \pm 0.1 $	&   $ 0.664 \pm 0.020 $	&   $ 1.79 \pm 0.02 $	&   $ 0.0680 \pm 0.0002 $	\\
MASS, $\beta$=\num{0}, D	&   $ 52.7 \pm 0.0 $	&   $ 0.662 \pm 0.003 $	&   $ 1.82 \pm 0.02 $	&   $ 0.0690 \pm 0.0003 $	\\

\bottomrule
\end{tabular}
\end{small}
\end{center}
\vskip -0.1in
\end{table*}

\begin{table*}
\caption{Uncertainty quantification metrics (proper scoring rules) on CIFAR-10 using the SmallMLP network trained on 10,000 datapoints.  Test Accuracy and Entropy of the network's predictive distribution are given for reference. Full experiment details are in Supplementary Material \ref{supp:ExperimentDetails}.  Values are the mean over 4 training runs with different random seeds, plus or minus the standard deviation. Emboldened values are those for which the minimum observed mean value in the column was within one standard deviation.  WD is weight decay; D is dropout. Lower values are better.}
\label{tab:SmallMLPUQ-10k}
\vskip 0.15in
\begin{center}
\begin{small}
\begin{tabular}{@{}lll|ll@{}}
\toprule
Method & Test Accuracy & Entropy & NLL & Brier Score  \\
\midrule

SoftmaxCE	&   $ 44.6 \pm 0.6 $	&   $ 0.250 \pm 0.004 $	&   $ 5.33 \pm 0.06 $	&   $ 0.0974 \pm 0.0011 $	\\
SoftmaxCE, WD	&   $ 36.4 \pm 0.9 $	&   $ 0.897 \pm 0.033 $	&   $ \mathbf{2.44 \pm 0.11} $	&   $ \mathbf{0.0905 \pm 0.0019} $	\\
SoftmaxCE, D	&   $ 44.1 \pm 0.6 $	&   $ 0.379 \pm 0.007 $	&   $ 3.76 \pm 0.04 $	&   $ 0.0935 \pm 0.0012 $	\\
VIB, $\beta$=\num{1e-1}	&   $ 40.6 \pm 0.4 $	&   $ 0.339 \pm 0.011 $	&   $ 4.86 \pm 0.23 $	&   $ 0.1017 \pm 0.0016 $	\\
VIB, $\beta$=\num{1e-2}	&   $ 43.8 \pm 0.8 $	&   $ 0.274 \pm 0.004 $	&   $ 4.83 \pm 0.16 $	&   $ 0.0983 \pm 0.0017 $	\\
VIB, $\beta$=\num{1e-3}	&   $ 44.6 \pm 0.6 $	&   $ 0.241 \pm 0.004 $	&   $ 5.50 \pm 0.11 $	&   $ 0.0983 \pm 0.0005 $	\\
VIB, $\beta$=\num{1e-1}, D	&   $ 40.1 \pm 0.5 $	&   $ 0.541 \pm 0.015 $	&   $ 3.22 \pm 0.09 $	&   $ 0.0945 \pm 0.0012 $	\\
VIB, $\beta$=\num{1e-2}, D	&   $ 43.9 \pm 0.3 $	&   $ 0.413 \pm 0.009 $	&   $ 3.43 \pm 0.09 $	&   $ 0.0927 \pm 0.0011 $	\\
VIB, $\beta$=\num{1e-3}, D	&   $ 44.4 \pm 0.4 $	&   $ 0.389 \pm 0.004 $	&   $ 3.61 \pm 0.06 $	&   $ 0.0927 \pm 0.0004 $	\\
MASS, $\beta$=\num{1e-2}	&   $ 39.9 \pm 1.2 $	&   $ 0.172 \pm 0.008 $	&   $ 10.06 \pm 0.37 $	&   $ 0.1109 \pm 0.0020 $	\\
MASS, $\beta$=\num{1e-3}	&   $ 41.5 \pm 0.7 $	&   $ 0.197 \pm 0.005 $	&   $ 8.03 \pm 0.28 $	&   $ 0.1069 \pm 0.0016 $	\\
MASS, $\beta$=\num{1e-4}	&   $ 41.5 \pm 1.1 $	&   $ 0.208 \pm 0.008 $	&   $ 7.55 \pm 0.44 $	&   $ 0.1054 \pm 0.0023 $	\\
MASS, $\beta$=\num{0}	&   $ 42.0 \pm 0.6 $	&   $ 0.215 \pm 0.009 $	&   $ 7.21 \pm 0.28 $	&   $ 0.1043 \pm 0.0015 $	\\
MASS, $\beta$=\num{1e-2}, D	&   $ 41.7 \pm 0.4 $	&   $ 0.399 \pm 0.017 $	&   $ 4.21 \pm 0.17 $	&   $ 0.0974 \pm 0.0013 $	\\
MASS, $\beta$=\num{1e-3}, D	&   $ 43.7 \pm 0.2 $	&   $ 0.412 \pm 0.010 $	&   $ 3.71 \pm 0.07 $	&   $ 0.0930 \pm 0.0006 $	\\
MASS, $\beta$=\num{1e-4}, D	&   $ 43.4 \pm 0.5 $	&   $ 0.435 \pm 0.011 $	&   $ 3.50 \pm 0.05 $	&   $ 0.0923 \pm 0.0005 $	\\
MASS, $\beta$=\num{0}, D	&   $ 43.9 \pm 0.4 $	&   $ 0.447 \pm 0.009 $	&   $ 3.40 \pm 0.03 $	&   $ \mathbf{0.0913 \pm 0.0008} $	\\

\bottomrule
\end{tabular}
\end{small}
\end{center}
\vskip -0.1in
\end{table*}

\begin{table*}
\caption{Uncertainty quantification metrics (proper scoring rules) on CIFAR-10 using the SmallMLP network trained on 2,500 datapoints. Test Accuracy and Entropy of the network's predictive distribution are given for reference.  Full experiment details are in Supplementary Material \ref{supp:ExperimentDetails}.  Values are the mean over 4 training runs with different random seeds, plus or minus the standard deviation. Emboldened values are those for which the minimum observed mean value in the column was within one standard deviation.  WD is weight decay; D is dropout. Lower values are better.}
\label{tab:SmallMLPUQ-2.5k}
\vskip 0.15in
\begin{center}
\begin{small}
\begin{tabular}{@{}lll|ll@{}}
\toprule
Method & Test Accuracy & Entropy & NLL & Brier Score  \\
\midrule

SoftmaxCE	&   $ 34.2 \pm 0.8 $	&   $ 0.236 \pm 0.025 $	&   $ 8.14 \pm 0.84 $	&   $ 0.1199 \pm 0.0024 $	\\
SoftmaxCE, WD	&   $ 23.9 \pm 0.9 $	&   $ 0.954 \pm 0.017 $	&   $ \mathbf{3.41 \pm 0.07} $	&   $ \mathbf{0.1114 \pm 0.0013} $	\\
SoftmaxCE, D	&   $ 33.7 \pm 1.1 $	&   $ 0.203 \pm 0.006 $	&   $ 9.68 \pm 0.06 $	&   $ 0.1219 \pm 0.0013 $	\\
VIB, $\beta$=\num{1e-1}	&   $ 32.2 \pm 0.6 $	&   $ 0.247 \pm 0.007 $	&   $ 8.33 \pm 0.50 $	&   $ 0.1219 \pm 0.0013 $	\\
VIB, $\beta$=\num{1e-2}	&   $ 34.6 \pm 0.4 $	&   $ 0.249 \pm 0.004 $	&   $ 7.36 \pm 0.18 $	&   $ 0.1175 \pm 0.0005 $	\\
VIB, $\beta$=\num{1e-3}	&   $ 35.6 \pm 0.5 $	&   $ 0.217 \pm 0.008 $	&   $ 8.03 \pm 0.37 $	&   $ 0.1175 \pm 0.0012 $	\\
VIB, $\beta$=\num{1e-1}, D	&   $ 29.0 \pm 0.6 $	&   $ 0.383 \pm 0.011 $	&   $ 6.32 \pm 0.16 $	&   $ 0.1219 \pm 0.0010 $	\\
VIB, $\beta$=\num{1e-2}, D	&   $ 32.5 \pm 0.9 $	&   $ 0.260 \pm 0.006 $	&   $ 7.41 \pm 0.25 $	&   $ 0.1211 \pm 0.0019 $	\\
VIB, $\beta$=\num{1e-3}, D	&   $ 34.5 \pm 1.0 $	&   $ 0.200 \pm 0.002 $	&   $ 9.44 \pm 0.16 $	&   $ 0.1203 \pm 0.0020 $	\\
MASS, $\beta$=\num{1e-2}	&   $ 29.6 \pm 0.4 $	&   $ 0.047 \pm 0.002 $	&   $ 57.13 \pm 1.60 $	&   $ 0.1381 \pm 0.0007 $	\\
MASS, $\beta$=\num{1e-3}	&   $ 32.7 \pm 0.8 $	&   $ 0.048 \pm 0.004 $	&   $ 46.40 \pm 3.81 $	&   $ 0.1322 \pm 0.0018 $	\\
MASS, $\beta$=\num{1e-4}	&   $ 34.0 \pm 0.3 $	&   $ 0.052 \pm 0.002 $	&   $ 39.10 \pm 1.96 $	&   $ 0.1293 \pm 0.0009 $	\\
MASS, $\beta$=\num{0}	&   $ 34.1 \pm 0.6 $	&   $ 0.061 \pm 0.003 $	&   $ 33.60 \pm 1.34 $	&   $ 0.1285 \pm 0.0012 $	\\
MASS, $\beta$=\num{1e-2}, D	&   $ 29.3 \pm 1.2 $	&   $ 0.118 \pm 0.008 $	&   $ 20.51 \pm 0.83 $	&   $ 0.1349 \pm 0.0018 $	\\
MASS, $\beta$=\num{1e-3}, D	&   $ 31.5 \pm 0.6 $	&   $ 0.145 \pm 0.004 $	&   $ 15.65 \pm 0.71 $	&   $ 0.1289 \pm 0.0010 $	\\
MASS, $\beta$=\num{1e-4}, D	&   $ 32.7 \pm 0.8 $	&   $ 0.185 \pm 0.010 $	&   $ 11.21 \pm 0.66 $	&   $ 0.1245 \pm 0.0011 $	\\
MASS, $\beta$=\num{0}, D	&   $ 32.2 \pm 1.1 $	&   $ 0.217 \pm 0.008 $	&   $ 9.70 \pm 0.29 $	&   $ 0.1236 \pm 0.0021 $	\\

\bottomrule
\end{tabular}
\end{small}
\end{center}
\vskip -0.1in
\end{table*}

\begin{table*}
\caption{Uncertainty quantification metrics (proper scoring rules) on CIFAR-10 using the ResNet20 network trained on 40,000 datapoints. Test Accuracy and Entropy of the network's predictive distribution are given for reference. Full experiment details are in Supplementary Material \ref{supp:ExperimentDetails}.  Values are the mean over 4 training runs with different random seeds, plus or minus the standard deviation. Emboldened values are those for which the minimum observed mean value in the column was within one standard deviation. Lower values are better.}
\label{tab:ResNetUQ-40k}
\vskip 0.15in
\begin{center}
\begin{small}
\begin{tabular}{@{}lll|ll@{}}
\toprule
Method & Test Accuracy & Entropy & NLL & Brier Score  \\
\midrule

SoftmaxCE	&   $ 81.7 \pm 0.3 $	&   $ 0.087 \pm 0.002 $	&   $ 1.45 \pm 0.04 $	&   $ \mathbf{0.0324 \pm 0.0005} $	\\
VIB, $\beta$=\num{1e-3}	&   $ 81.0 \pm 0.3 $	&   $ 0.089 \pm 0.003 $	&   $ 1.51 \pm 0.04 $	&   $ 0.0334 \pm 0.0005 $	\\
VIB, $\beta$=\num{1e-4}	&   $ 81.2 \pm 0.4 $	&   $ 0.092 \pm 0.002 $	&   $ 1.46 \pm 0.05 $	&   $ 0.0331 \pm 0.0007 $	\\
VIB, $\beta$=\num{1e-5}	&   $ 80.9 \pm 0.5 $	&   $ 0.087 \pm 0.005 $	&   $ 1.58 \pm 0.08 $	&   $ 0.0339 \pm 0.0008 $	\\
VIB, $\beta$=\num{0}	&   $ 81.5 \pm 0.2 $	&   $ 0.079 \pm 0.001 $	&   $ 1.70 \pm 0.06 $	&   $ 0.0331 \pm 0.0007 $	\\
MASS, $\beta$=\num{1e-3}	&   $ 75.8 \pm 0.5 $	&   $ 0.139 \pm 0.003 $	&   $ 1.66 \pm 0.07 $	&   $ 0.0417 \pm 0.0011 $	\\
MASS, $\beta$=\num{1e-4}	&   $ 80.6 \pm 0.5 $	&   $ 0.109 \pm 0.002 $	&   $ \mathbf{1.33 \pm 0.02} $	&   $ 0.0337 \pm 0.0008 $	\\
MASS, $\beta$=\num{1e-5}	&   $ 81.6 \pm 0.4 $	&   $ 0.095 \pm 0.003 $	&   $ \mathbf{1.36 \pm 0.03} $	&   $ \mathbf{0.0320 \pm 0.0005} $	\\
MASS, $\beta$=\num{0}	&   $ 81.5 \pm 0.2 $	&   $ 0.092 \pm 0.000 $	&   $ 1.43 \pm 0.04 $	&   $ 0.0325 \pm 0.0004 $	\\

\bottomrule
\end{tabular}
\end{small}
\end{center}
\vskip -0.1in
\end{table*}

\begin{table*}
\caption{Uncertainty quantification metrics (proper scoring rules) on CIFAR-10 using the ResNet20 network trained on 10,000 datapoints. Test Accuracy and Entropy of the network's predictive distribution are given for reference. Full experiment details are in Supplementary Material \ref{supp:ExperimentDetails}.  Values are the mean over 4 training runs with different random seeds, plus or minus the standard deviation. Emboldened values are those for which the minimum observed mean value in the column was within one standard deviation. Lower values are better.}
\label{tab:ResNetUQ-10k}
\vskip 0.15in
\begin{center}
\begin{small}
\begin{tabular}{@{}lll|ll@{}}
\toprule
Method & Test Accuracy & Entropy & NLL & Brier Score \\
\midrule

SoftmaxCE	&   $ 67.5 \pm 0.8 $	&   $ 0.195 \pm 0.011 $	&   $ \mathbf{2.19 \pm 0.06} $	&   $ \mathbf{0.0557 \pm 0.0012} $	\\
VIB, $\beta$=\num{1e-3}	&   $ 66.9 \pm 1.0 $	&   $ 0.193 \pm 0.008 $	&   $ \mathbf{2.26 \pm 0.13} $	&   $ 0.0570 \pm 0.0017 $	\\
VIB, $\beta$=\num{1e-4}	&   $ 66.4 \pm 0.5 $	&   $ 0.197 \pm 0.009 $	&   $ 2.30 \pm 0.02 $	&   $ 0.0577 \pm 0.0007 $	\\
VIB, $\beta$=\num{1e-5}	&   $ 67.9 \pm 0.8 $	&   $ 0.166 \pm 0.010 $	&   $ 2.49 \pm 0.13 $	&   $ \mathbf{0.0561 \pm 0.0011} $	\\
VIB, $\beta$=\num{0}	&   $ 67.1 \pm 1.0 $	&   $ 0.162 \pm 0.009 $	&   $ 2.64 \pm 0.11 $	&   $ 0.0578 \pm 0.0016 $	\\
MASS, $\beta$=\num{1e-3}	&   $ 59.6 \pm 0.8 $	&   $ 0.252 \pm 0.007 $	&   $ 2.61 \pm 0.11 $	&   $ 0.0688 \pm 0.0014 $	\\
MASS, $\beta$=\num{1e-4}	&   $ 66.6 \pm 0.4 $	&   $ 0.209 \pm 0.009 $	&   $ \mathbf{2.18 \pm 0.05} $	&   $ 0.0570 \pm 0.0005 $	\\
MASS, $\beta$=\num{1e-5}	&   $ 67.4 \pm 1.0 $	&   $ 0.192 \pm 0.007 $	&   $ \mathbf{2.22 \pm 0.07} $	&   $ \mathbf{0.0561 \pm 0.0017} $	\\
MASS, $\beta$=\num{0}	&   $ 67.4 \pm 0.3 $	&   $ 0.189 \pm 0.004 $	&   $ 2.30 \pm 0.08 $	&   $ \mathbf{0.0562 \pm 0.0007} $	\\

\bottomrule
\end{tabular}
\end{small}
\end{center}
\vskip -0.1in
\end{table*}

\begin{table*}
\caption{Uncertainty quantification metrics (proper scoring rules) on CIFAR-10 using the ResNet20 network trained on 2,500 datapoints. Test Accuracy and Entropy of the network's predictive distribution are given for reference. Full experiment details are in Supplementary Material \ref{supp:ExperimentDetails}.  Values are the mean over 4 training runs with different random seeds, plus or minus the standard deviation. Emboldened values are those for which the minimum observed mean value in the column was within one standard deviation. Lower values are better.}
\label{tab:ResNetUQ-2.5k}
\vskip 0.15in
\begin{center}
\begin{small}
\begin{tabular}{@{}lll|ll@{}}
\toprule
Method & Test Accuracy & Entropy & NLL & Brier Score  \\
\midrule

SoftmaxCE	&   $ 50.0 \pm 0.7 $	&   $ 0.349 \pm 0.005 $	&   $ \mathbf{2.98 \pm 0.06} $	&   $ \mathbf{0.0833 \pm 0.0012} $	\\
VIB, $\beta$=\num{1e-3}	&   $ 49.5 \pm 1.1 $	&   $ 0.363 \pm 0.005 $	&   $ 3.10 \pm 0.11 $	&   $ \mathbf{0.0836 \pm 0.0020} $	\\
VIB, $\beta$=\num{1e-4}	&   $ 49.4 \pm 1.0 $	&   $ 0.372 \pm 0.016 $	&   $ \mathbf{3.02 \pm 0.10} $	&   $ \mathbf{0.0833 \pm 0.0016} $	\\
VIB, $\beta$=\num{1e-5}	&   $ 50.0 \pm 1.1 $	&   $ 0.306 \pm 0.021 $	&   $ 3.48 \pm 0.15 $	&   $ 0.0849 \pm 0.0013 $	\\
VIB, $\beta$=\num{0}	&   $ 50.6 \pm 0.8 $	&   $ 0.271 \pm 0.019 $	&   $ 3.80 \pm 0.15 $	&   $ 0.0850 \pm 0.0007 $	\\
MASS, $\beta$=\num{1e-3}	&   $ 38.2 \pm 0.7 $	&   $ 0.469 \pm 0.012 $	&   $ 3.75 \pm 0.08 $	&   $ 0.1010 \pm 0.0017 $	\\
MASS, $\beta$=\num{1e-4}	&   $ 49.9 \pm 1.0 $	&   $ 0.344 \pm 0.001 $	&   $ 3.24 \pm 0.08 $	&   $ \mathbf{0.0837 \pm 0.0017} $	\\
MASS, $\beta$=\num{1e-5}	&   $ 50.1 \pm 0.5 $	&   $ 0.277 \pm 0.008 $	&   $ 3.81 \pm 0.11 $	&   $ 0.0859 \pm 0.0005 $	\\
MASS, $\beta$=\num{0}	&   $ 50.2 \pm 1.0 $	&   $ 0.265 \pm 0.009 $	&   $ 3.96 \pm 0.15 $	&   $ 0.0861 \pm 0.0020 $	\\

\bottomrule
\end{tabular}
\end{small}
\end{center}
\vskip -0.1in
\end{table*}

\begin{table*}[h]
\caption{Out-of-distribution detection metrics for SmallMLP network trained on 40,000 CIFAR-10 images, with SVHN as the out-of-distribution examples.  Full experiment details are in Supplementary Material \ref{supp:ExperimentDetails}.  Values are the mean over 4 training runs with different random seeds, plus or minus the standard deviation. Emboldened values are those for which the maximum observed mean value in the column was within one standard deviation.   WD is weight decay; D is dropout. Higher values are better.}
\label{tab:SmallMLP-OOD-40k}
\vskip 0.15in
\begin{center}
\begin{small}
\begin{tabular}{@{}lc|cccc@{}}
\toprule
Training Method & Test Accuracy & Detection Method & AUROC & APR In & APR Out \\
\midrule

\multirow{2}{*}{	SoftmaxCE	}	& \multirow{2}{*}{$ 52.7 \pm  0.4$}	& Entropy	&   $ 0.65 \pm 0.01 $	&   $ 0.68 \pm 0.01 $	&   $ 0.61 \pm 0.01 $	\\	& & $\max_i q_\phi(f_\theta(x) | y_i)$	&   $ 0.38 \pm 0.01 $	&   $ 0.42 \pm 0.01 $	&   $ 0.43 \pm 0.01 $	\\ \cmidrule(r){1-2}
\multirow{2}{*}{	SoftmaxCE, WD	}	& \multirow{2}{*}{$ 48.1 \pm  0.1$}	& Entropy	&   $ 0.65 \pm 0.01 $	&   $ 0.69 \pm 0.01 $	&   $ 0.59 \pm 0.01 $	\\	& & $\max_i q_\phi(f_\theta(x) | y_i)$	&   $ 0.43 \pm 0.01 $	&   $ 0.43 \pm 0.01 $	&   $ 0.48 \pm 0.02 $	\\ \cmidrule(r){1-2}
\multirow{2}{*}{	SoftmaxCE, D	}	& \multirow{2}{*}{$ 53.7 \pm  0.3$}	& Entropy	&   $ 0.71 \pm 0.01 $	&   $ \mathbf{0.75 \pm 0.01} $	&   $ 0.65 \pm 0.01 $	\\	& & $\max_i q_\phi(f_\theta(x) | y_i)$	&   $ 0.33 \pm 0.00 $	&   $ 0.39 \pm 0.00 $	&   $ 0.40 \pm 0.00 $	\\ \cmidrule(r){1-2}
\multirow{2}{*}{	VIB, $\beta$=\num{1e-1}	}	& \multirow{2}{*}{$ 46.1 \pm  0.5$}	& Entropy	&   $ 0.62 \pm 0.01 $	&   $ 0.66 \pm 0.01 $	&   $ 0.57 \pm 0.01 $	\\	& & Rate	&   $ 0.47 \pm 0.02 $	&   $ 0.49 \pm 0.01 $	&   $ 0.46 \pm 0.01 $	\\ \cmidrule(r){1-2}
\multirow{2}{*}{	VIB, $\beta$=\num{1e-2}	}	& \multirow{2}{*}{$ 51.9 \pm  0.8$}	& Entropy	&   $ 0.64 \pm 0.01 $	&   $ 0.67 \pm 0.01 $	&   $ 0.59 \pm 0.01 $	\\	& & Rate	&   $ 0.58 \pm 0.03 $	&   $ 0.59 \pm 0.02 $	&   $ 0.55 \pm 0.02 $	\\ \cmidrule(r){1-2}
\multirow{2}{*}{	VIB, $\beta$=\num{1e-3}	}	& \multirow{2}{*}{$ 51.8 \pm  0.8$}	& Entropy	&   $ 0.65 \pm 0.00 $	&   $ 0.67 \pm 0.01 $	&   $ 0.61 \pm 0.00 $	\\	& & Rate	&   $ 0.52 \pm 0.03 $	&   $ 0.54 \pm 0.03 $	&   $ 0.50 \pm 0.03 $	\\ \cmidrule(r){1-2}
\multirow{2}{*}{	VIB, $\beta$=\num{1e-1}, D	}	& \multirow{2}{*}{$ 49.5 \pm  0.5$}	& Entropy	&   $ 0.68 \pm 0.01 $	&   $ \mathbf{0.74 \pm 0.01} $	&   $ 0.60 \pm 0.01 $	\\	& & Rate	&   $ 0.34 \pm 0.01 $	&   $ 0.40 \pm 0.01 $	&   $ 0.39 \pm 0.00 $	\\ \cmidrule(r){1-2}
\multirow{2}{*}{	VIB, $\beta$=\num{1e-2}, D	}	& \multirow{2}{*}{$ 53.6 \pm  0.3$}	& Entropy	&   $ 0.69 \pm 0.02 $	&   $ 0.73 \pm 0.01 $	&   $ 0.62 \pm 0.02 $	\\	& & Rate	&   $ 0.50 \pm 0.03 $	&   $ 0.51 \pm 0.02 $	&   $ 0.51 \pm 0.03 $	\\ \cmidrule(r){1-2}
\multirow{2}{*}{	VIB, $\beta$=\num{1e-3}, D	}	& \multirow{2}{*}{$ 54.3 \pm  0.2$}	& Entropy	&   $ 0.69 \pm 0.01 $	&   $ 0.73 \pm 0.01 $	&   $ 0.62 \pm 0.01 $	\\	& & Rate	&   $ 0.45 \pm 0.01 $	&   $ 0.45 \pm 0.01 $	&   $ 0.49 \pm 0.01 $	\\ \cmidrule(r){1-2}
\multirow{2}{*}{	MASS, $\beta$=\num{1e-2}	}	& \multirow{2}{*}{$ 46.3 \pm  1.2$}	& Entropy	&   $ 0.64 \pm 0.01 $	&   $ 0.67 \pm 0.01 $	&   $ 0.61 \pm 0.01 $	\\	& & $\max_i q_\phi(f_\theta(x) | y_i)$	&   $ 0.51 \pm 0.03 $	&   $ 0.56 \pm 0.05 $	&   $ 0.49 \pm 0.01 $	\\ \cmidrule(r){1-2}
\multirow{2}{*}{	MASS, $\beta$=\num{1e-3}	}	& \multirow{2}{*}{$ 47.8 \pm  0.8$}	& Entropy	&   $ 0.63 \pm 0.02 $	&   $ 0.65 \pm 0.02 $	&   $ 0.60 \pm 0.02 $	\\	& & $\max_i q_\phi(f_\theta(x) | y_i)$	&   $ 0.63 \pm 0.07 $	&   $ 0.64 \pm 0.08 $	&   $ 0.60 \pm 0.05 $	\\ \cmidrule(r){1-2}
\multirow{2}{*}{	MASS, $\beta$=\num{1e-4}	}	& \multirow{2}{*}{$ 47.9 \pm  0.8$}	& Entropy	&   $ 0.63 \pm 0.02 $	&   $ 0.65 \pm 0.02 $	&   $ 0.60 \pm 0.02 $	\\	& & $\max_i q_\phi(f_\theta(x) | y_i)$	&   $ 0.57 \pm 0.06 $	&   $ 0.58 \pm 0.05 $	&   $ 0.56 \pm 0.05 $	\\ \cmidrule(r){1-2}
\multirow{2}{*}{	MASS, $\beta$=\num{0}	}	& \multirow{2}{*}{$ 48.2 \pm  0.9$}	& Entropy	&   $ 0.63 \pm 0.02 $	&   $ 0.65 \pm 0.02 $	&   $ 0.59 \pm 0.02 $	\\	& & $\max_i q_\phi(f_\theta(x) | y_i)$	&   $ 0.58 \pm 0.06 $	&   $ 0.58 \pm 0.05 $	&   $ 0.56 \pm 0.05 $	\\ \cmidrule(r){1-2}
\multirow{2}{*}{	MASS, $\beta$=\num{1e-2}, D	}	& \multirow{2}{*}{$ 52.0 \pm  0.6$}	& Entropy	&   $ \mathbf{0.73 \pm 0.01} $	&   $ \mathbf{0.75 \pm 0.01} $	&   $ \mathbf{0.67 \pm 0.01} $	\\	& & $\max_i q_\phi(f_\theta(x) | y_i)$	&   $ 0.65 \pm 0.06 $	&   $ 0.70 \pm 0.06 $	&   $ 0.58 \pm 0.05 $	\\ \cmidrule(r){1-2}
\multirow{2}{*}{	MASS, $\beta$=\num{1e-3}, D	}	& \multirow{2}{*}{$ 53.1 \pm  0.4$}	& Entropy	&   $ \mathbf{0.71 \pm 0.02} $	&   $ 0.73 \pm 0.01 $	&   $ 0.64 \pm 0.02 $	\\	& & $\max_i q_\phi(f_\theta(x) | y_i)$	&   $ 0.64 \pm 0.10 $	&   $ 0.66 \pm 0.10 $	&   $ 0.60 \pm 0.09 $	\\ \cmidrule(r){1-2}
\multirow{2}{*}{	MASS, $\beta$=\num{1e-4}, D	}	& \multirow{2}{*}{$ 53.2 \pm  0.1$}	& Entropy	&   $ \mathbf{0.73 \pm 0.01} $	&   $ \mathbf{0.75 \pm 0.01} $	&   $ \mathbf{0.67 \pm 0.01} $	\\	& & $\max_i q_\phi(f_\theta(x) | y_i)$	&   $ 0.65 \pm 0.09 $	&   $ 0.65 \pm 0.08 $	&   $ 0.61 \pm 0.08 $	\\ \cmidrule(r){1-2}
\multirow{2}{*}{	MASS, $\beta$=\num{0}, D	}	& \multirow{2}{*}{$ 52.7 \pm  0.0$}	& Entropy	&   $ \mathbf{0.71 \pm 0.02} $	&   $ \mathbf{0.74 \pm 0.01} $	&   $ \mathbf{0.65 \pm 0.02} $	\\	& & $\max_i q_\phi(f_\theta(x) | y_i)$	&   $ 0.63 \pm 0.09 $	&   $ 0.65 \pm 0.08 $	&   $ 0.59 \pm 0.09 $	\\

\bottomrule
\end{tabular}
\end{small}
\end{center}
\vskip -0.1in
\end{table*}

\begin{table*}[h]
\caption{Out-of-distribution detection metrics for SmallMLP network trained on 10,000 CIFAR-10 images, with SVHN as the out-of-distribution examples.  Full experiment details are in Supplementary Material \ref{supp:ExperimentDetails}.  Values are the mean over 4 training runs with different random seeds, plus or minus the standard deviation. Emboldened values are those for which the maximum observed mean value in the column was within one standard deviation.   WD is weight decay; D is dropout. Higher values are better.}
\label{tab:SmallMLP-OOD-10k}
\vskip 0.15in
\begin{center}
\begin{small}
\begin{tabular}{@{}lc|cccc@{}}
\toprule
Training Method & Test Accuracy & Detection Method & AUROC & APR In & APR Out \\
\midrule

\multirow{2}{*}{	SoftmaxCE	}	& \multirow{2}{*}{$ 44.6 \pm  0.6$}	& Entropy	&   $ 0.62 \pm 0.00 $	&   $ 0.64 \pm 0.01 $	&   $ 0.59 \pm 0.00 $	\\	& & $\max_i q_\phi(f_\theta(x) | y_i)$	&   $ 0.36 \pm 0.01 $	&   $ 0.40 \pm 0.01 $	&   $ 0.42 \pm 0.00 $	\\ \cmidrule(r){1-2}
\multirow{2}{*}{	SoftmaxCE, WD	}	& \multirow{2}{*}{$ 36.4 \pm  0.9$}	& Entropy	&   $ 0.62 \pm 0.02 $	&   $ 0.62 \pm 0.02 $	&   $ 0.60 \pm 0.02 $	\\	& & $\max_i q_\phi(f_\theta(x) | y_i)$	&   $ 0.30 \pm 0.01 $	&   $ 0.37 \pm 0.00 $	&   $ 0.39 \pm 0.01 $	\\ \cmidrule(r){1-2}
\multirow{2}{*}{	SoftmaxCE, D	}	& \multirow{2}{*}{$ 44.1 \pm  0.6$}	& Entropy	&   $ 0.66 \pm 0.01 $	&   $ \mathbf{0.69 \pm 0.01} $	&   $ 0.62 \pm 0.01 $	\\	& & $\max_i q_\phi(f_\theta(x) | y_i)$	&   $ 0.29 \pm 0.01 $	&   $ 0.37 \pm 0.00 $	&   $ 0.38 \pm 0.00 $	\\ \cmidrule(r){1-2}
\multirow{2}{*}{	VIB, $\beta$=\num{1e-1}	}	& \multirow{2}{*}{$ 40.6 \pm  0.4$}	& Entropy	&   $ 0.60 \pm 0.01 $	&   $ 0.64 \pm 0.01 $	&   $ 0.56 \pm 0.01 $	\\	& & Rate	&   $ 0.50 \pm 0.02 $	&   $ 0.52 \pm 0.02 $	&   $ 0.48 \pm 0.01 $	\\ \cmidrule(r){1-2}
\multirow{2}{*}{	VIB, $\beta$=\num{1e-2}	}	& \multirow{2}{*}{$ 43.8 \pm  0.8$}	& Entropy	&   $ 0.62 \pm 0.00 $	&   $ 0.64 \pm 0.01 $	&   $ 0.59 \pm 0.01 $	\\	& & Rate	&   $ 0.55 \pm 0.03 $	&   $ 0.57 \pm 0.02 $	&   $ 0.53 \pm 0.02 $	\\ \cmidrule(r){1-2}
\multirow{2}{*}{	VIB, $\beta$=\num{1e-3}	}	& \multirow{2}{*}{$ 44.6 \pm  0.6$}	& Entropy	&   $ 0.62 \pm 0.01 $	&   $ 0.64 \pm 0.01 $	&   $ 0.59 \pm 0.01 $	\\	& & Rate	&   $ 0.49 \pm 0.04 $	&   $ 0.52 \pm 0.04 $	&   $ 0.48 \pm 0.03 $	\\ \cmidrule(r){1-2}
\multirow{2}{*}{	VIB, $\beta$=\num{1e-1}, D	}	& \multirow{2}{*}{$ 40.1 \pm  0.5$}	& Entropy	&   $ 0.62 \pm 0.00 $	&   $ 0.65 \pm 0.01 $	&   $ 0.57 \pm 0.00 $	\\	& & Rate	&   $ 0.49 \pm 0.02 $	&   $ 0.51 \pm 0.02 $	&   $ 0.48 \pm 0.01 $	\\ \cmidrule(r){1-2}
\multirow{2}{*}{	VIB, $\beta$=\num{1e-2}, D	}	& \multirow{2}{*}{$ 43.9 \pm  0.3$}	& Entropy	&   $ \mathbf{0.67 \pm 0.01} $	&   $ \mathbf{0.69 \pm 0.01} $	&   $ 0.62 \pm 0.00 $	\\	& & Rate	&   $ 0.60 \pm 0.02 $	&   $ 0.61 \pm 0.02 $	&   $ 0.56 \pm 0.01 $	\\ \cmidrule(r){1-2}
\multirow{2}{*}{	VIB, $\beta$=\num{1e-3}, D	}	& \multirow{2}{*}{$ 44.4 \pm  0.4$}	& Entropy	&   $ \mathbf{0.67 \pm 0.01} $	&   $ \mathbf{0.69 \pm 0.01} $	&   $ 0.63 \pm 0.01 $	\\	& & Rate	&   $ 0.50 \pm 0.03 $	&   $ 0.53 \pm 0.03 $	&   $ 0.49 \pm 0.02 $	\\ \cmidrule(r){1-2}
\multirow{2}{*}{	MASS, $\beta$=\num{1e-2}	}	& \multirow{2}{*}{$ 39.9 \pm  1.2$}	& Entropy	&   $ 0.63 \pm 0.02 $	&   $ 0.64 \pm 0.02 $	&   $ 0.60 \pm 0.01 $	\\	& & $\max_i q_\phi(f_\theta(x) | y_i)$	&   $ 0.54 \pm 0.03 $	&   $ 0.58 \pm 0.04 $	&   $ 0.50 \pm 0.02 $	\\ \cmidrule(r){1-2}
\multirow{2}{*}{	MASS, $\beta$=\num{1e-3}	}	& \multirow{2}{*}{$ 41.5 \pm  0.7$}	& Entropy	&   $ 0.61 \pm 0.02 $	&   $ 0.62 \pm 0.02 $	&   $ 0.59 \pm 0.01 $	\\	& & $\max_i q_\phi(f_\theta(x) | y_i)$	&   $ 0.59 \pm 0.07 $	&   $ 0.60 \pm 0.06 $	&   $ 0.56 \pm 0.06 $	\\ \cmidrule(r){1-2}
\multirow{2}{*}{	MASS, $\beta$=\num{1e-4}	}	& \multirow{2}{*}{$ 41.5 \pm  1.1$}	& Entropy	&   $ 0.60 \pm 0.00 $	&   $ 0.61 \pm 0.01 $	&   $ 0.58 \pm 0.00 $	\\	& & $\max_i q_\phi(f_\theta(x) | y_i)$	&   $ 0.55 \pm 0.05 $	&   $ 0.56 \pm 0.04 $	&   $ 0.53 \pm 0.04 $	\\ \cmidrule(r){1-2}
\multirow{2}{*}{	MASS, $\beta$=\num{0}	}	& \multirow{2}{*}{$ 42.0 \pm  0.6$}	& Entropy	&   $ 0.60 \pm 0.02 $	&   $ 0.61 \pm 0.02 $	&   $ 0.57 \pm 0.01 $	\\	& & $\max_i q_\phi(f_\theta(x) | y_i)$	&   $ 0.55 \pm 0.06 $	&   $ 0.57 \pm 0.04 $	&   $ 0.54 \pm 0.05 $	\\ \cmidrule(r){1-2}
\multirow{2}{*}{	MASS, $\beta$=\num{1e-2}, D	}	& \multirow{2}{*}{$ 41.7 \pm  0.4$}	& Entropy	&   $ \mathbf{0.67 \pm 0.01} $	&   $ \mathbf{0.68 \pm 0.01} $	&   $ \mathbf{0.63 \pm 0.01} $	\\	& & $\max_i q_\phi(f_\theta(x) | y_i)$	&   $ 0.63 \pm 0.04 $	&   $ \mathbf{0.65 \pm 0.04} $	&   $ 0.57 \pm 0.04 $	\\ \cmidrule(r){1-2}
\multirow{2}{*}{	MASS, $\beta$=\num{1e-3}, D	}	& \multirow{2}{*}{$ 43.7 \pm  0.2$}	& Entropy	&   $ \mathbf{0.67 \pm 0.01} $	&   $ \mathbf{0.68 \pm 0.01} $	&   $ \mathbf{0.63 \pm 0.01} $	\\	& & $\max_i q_\phi(f_\theta(x) | y_i)$	&   $ \mathbf{0.66 \pm 0.05} $	&   $ 0.66 \pm 0.04 $	&   $ \mathbf{0.61 \pm 0.06} $	\\ \cmidrule(r){1-2}
\multirow{2}{*}{	MASS, $\beta$=\num{1e-4}, D	}	& \multirow{2}{*}{$ 43.4 \pm  0.5$}	& Entropy	&   $ \mathbf{0.68 \pm 0.01} $	&   $ \mathbf{0.69 \pm 0.01} $	&   $ \mathbf{0.64 \pm 0.02} $	\\	& & $\max_i q_\phi(f_\theta(x) | y_i)$	&   $ \mathbf{0.64 \pm 0.07} $	&   $ \mathbf{0.65 \pm 0.05} $	&   $ \mathbf{0.59 \pm 0.08} $	\\ \cmidrule(r){1-2}
\multirow{2}{*}{	MASS, $\beta$=\num{0}, D	}	& \multirow{2}{*}{$ 43.9 \pm  0.4$}	& Entropy	&   $ \mathbf{0.68 \pm 0.00} $	&   $ \mathbf{0.69 \pm 0.01} $	&   $ \mathbf{0.64 \pm 0.00} $	\\	& & $\max_i q_\phi(f_\theta(x) | y_i)$	&   $ \mathbf{0.65 \pm 0.04} $	&   $ \mathbf{0.66 \pm 0.03} $	&   $ \mathbf{0.60 \pm 0.06} $	\\ 

\bottomrule
\end{tabular}
\end{small}
\end{center}
\vskip -0.1in
\end{table*}

\begin{table*}[h]
\caption{Out-of-distribution detection metrics for SmallMLP network trained on 2,500 CIFAR-10 images, with SVHN as the out-of-distribution examples.  Full experiment details are in Supplementary Material \ref{supp:ExperimentDetails}.   Values are the mean over 4 training runs with different random seeds, plus or minus the standard deviation. Emboldened values are those for which the maximum observed mean value in the column was within one standard deviation.  WD is weight decay; D is dropout. Higher values are better.}
\label{tab:SmallMLP-OOD-2.5k}
\vskip 0.15in
\begin{center}
\begin{small}
\begin{tabular}{@{}lc|cccc@{}}
\toprule
Training Method & Test Accuracy & Detection Method & AUROC & APR In & APR Out \\
\midrule

\multirow{2}{*}{	SoftmaxCE	}	& \multirow{2}{*}{$ 34.2 \pm  0.8$}	& Entropy	&   $ 0.61 \pm 0.01 $	&   $ 0.62 \pm 0.01 $	&   $ 0.59 \pm 0.01 $	\\	& & $\max_i q_\phi(f_\theta(x) | y_i)$	&   $ 0.30 \pm 0.02 $	&   $ 0.38 \pm 0.01 $	&   $ 0.39 \pm 0.01 $	\\ \cmidrule(r){1-2}
\multirow{2}{*}{	SoftmaxCE, WD	}	& \multirow{2}{*}{$ 23.9 \pm  0.9$}	& Entropy	&   $ \mathbf{0.70 \pm 0.03} $	&   $ \mathbf{0.67 \pm 0.03} $	&   $ \mathbf{0.71 \pm 0.04} $	\\	& & $\max_i q_\phi(f_\theta(x) | y_i)$	&   $ 0.23 \pm 0.02 $	&   $ 0.36 \pm 0.01 $	&   $ 0.36 \pm 0.01 $	\\ \cmidrule(r){1-2}
\multirow{2}{*}{	SoftmaxCE, D	}	& \multirow{2}{*}{$ 33.7 \pm  1.1$}	& Entropy	&   $ 0.60 \pm 0.01 $	&   $ 0.62 \pm 0.01 $	&   $ 0.58 \pm 0.01 $	\\	& & $\max_i q_\phi(f_\theta(x) | y_i)$	&   $ 0.27 \pm 0.01 $	&   $ 0.37 \pm 0.00 $	&   $ 0.37 \pm 0.00 $	\\ \cmidrule(r){1-2}
\multirow{2}{*}{	VIB, $\beta$=\num{1e-1}	}	& \multirow{2}{*}{$ 32.2 \pm  0.6$}	& Entropy	&   $ 0.58 \pm 0.01 $	&   $ 0.60 \pm 0.02 $	&   $ 0.56 \pm 0.01 $	\\	& & Rate	&   $ 0.52 \pm 0.02 $	&   $ 0.54 \pm 0.02 $	&   $ 0.49 \pm 0.02 $	\\ \cmidrule(r){1-2}
\multirow{2}{*}{	VIB, $\beta$=\num{1e-2}	}	& \multirow{2}{*}{$ 34.6 \pm  0.4$}	& Entropy	&   $ 0.60 \pm 0.01 $	&   $ 0.62 \pm 0.01 $	&   $ 0.57 \pm 0.01 $	\\	& & Rate	&   $ 0.52 \pm 0.04 $	&   $ 0.55 \pm 0.04 $	&   $ 0.48 \pm 0.03 $	\\ \cmidrule(r){1-2}
\multirow{2}{*}{	VIB, $\beta$=\num{1e-3}	}	& \multirow{2}{*}{$ 35.6 \pm  0.5$}	& Entropy	&   $ 0.59 \pm 0.01 $	&   $ 0.60 \pm 0.01 $	&   $ 0.56 \pm 0.01 $	\\	& & Rate	&   $ 0.50 \pm 0.04 $	&   $ 0.53 \pm 0.03 $	&   $ 0.48 \pm 0.03 $	\\ \cmidrule(r){1-2}
\multirow{2}{*}{	VIB, $\beta$=\num{1e-1}, D	}	& \multirow{2}{*}{$ 29.0 \pm  0.6$}	& Entropy	&   $ 0.57 \pm 0.01 $	&   $ 0.60 \pm 0.01 $	&   $ 0.53 \pm 0.01 $	\\	& & Rate	&   $ 0.45 \pm 0.02 $	&   $ 0.48 \pm 0.02 $	&   $ 0.46 \pm 0.01 $	\\ \cmidrule(r){1-2}
\multirow{2}{*}{	VIB, $\beta$=\num{1e-2}, D	}	& \multirow{2}{*}{$ 32.5 \pm  0.9$}	& Entropy	&   $ 0.62 \pm 0.01 $	&   $ 0.63 \pm 0.02 $	&   $ 0.59 \pm 0.01 $	\\	& & Rate	&   $ 0.53 \pm 0.05 $	&   $ 0.56 \pm 0.04 $	&   $ 0.52 \pm 0.04 $	\\ \cmidrule(r){1-2}
\multirow{2}{*}{	VIB, $\beta$=\num{1e-3}, D	}	& \multirow{2}{*}{$ 34.5 \pm  1.0$}	& Entropy	&   $ 0.63 \pm 0.01 $	&   $ 0.64 \pm 0.02 $	&   $ 0.60 \pm 0.01 $	\\	& & Rate	&   $ 0.56 \pm 0.05 $	&   $ 0.57 \pm 0.03 $	&   $ 0.54 \pm 0.05 $	\\ \cmidrule(r){1-2}
\multirow{2}{*}{	MASS, $\beta$=\num{1e-2}	}	& \multirow{2}{*}{$ 29.6 \pm  0.4$}	& Entropy	&   $ 0.59 \pm 0.01 $	&   $ 0.61 \pm 0.01 $	&   $ 0.56 \pm 0.01 $	\\	& & $\max_i q_\phi(f_\theta(x) | y_i)$	&   $ 0.43 \pm 0.03 $	&   $ 0.48 \pm 0.03 $	&   $ 0.43 \pm 0.01 $	\\ \cmidrule(r){1-2}
\multirow{2}{*}{	MASS, $\beta$=\num{1e-3}	}	& \multirow{2}{*}{$ 32.7 \pm  0.8$}	& Entropy	&   $ 0.57 \pm 0.01 $	&   $ 0.59 \pm 0.02 $	&   $ 0.55 \pm 0.01 $	\\	& & $\max_i q_\phi(f_\theta(x) | y_i)$	&   $ 0.57 \pm 0.04 $	&   $ 0.59 \pm 0.04 $	&   $ 0.54 \pm 0.03 $	\\ \cmidrule(r){1-2}
\multirow{2}{*}{	MASS, $\beta$=\num{1e-4}	}	& \multirow{2}{*}{$ 34.0 \pm  0.3$}	& Entropy	&   $ 0.57 \pm 0.01 $	&   $ 0.57 \pm 0.01 $	&   $ 0.55 \pm 0.01 $	\\	& & $\max_i q_\phi(f_\theta(x) | y_i)$	&   $ 0.59 \pm 0.03 $	&   $ 0.58 \pm 0.03 $	&   $ 0.57 \pm 0.03 $	\\ \cmidrule(r){1-2}
\multirow{2}{*}{	MASS, $\beta$=\num{0}	}	& \multirow{2}{*}{$ 34.1 \pm  0.6$}	& Entropy	&   $ 0.57 \pm 0.01 $	&   $ 0.58 \pm 0.01 $	&   $ 0.55 \pm 0.00 $	\\	& & $\max_i q_\phi(f_\theta(x) | y_i)$	&   $ 0.61 \pm 0.03 $	&   $ 0.59 \pm 0.04 $	&   $ 0.59 \pm 0.04 $	\\ \cmidrule(r){1-2}
\multirow{2}{*}{	MASS, $\beta$=\num{1e-2}, D	}	& \multirow{2}{*}{$ 29.3 \pm  1.2$}	& Entropy	&   $ 0.62 \pm 0.02 $	&   $ \mathbf{0.64 \pm 0.03} $	&   $ 0.59 \pm 0.02 $	\\	& & $\max_i q_\phi(f_\theta(x) | y_i)$	&   $ 0.50 \pm 0.05 $	&   $ 0.54 \pm 0.05 $	&   $ 0.47 \pm 0.03 $	\\ \cmidrule(r){1-2}
\multirow{2}{*}{	MASS, $\beta$=\num{1e-3}, D	}	& \multirow{2}{*}{$ 31.5 \pm  0.6$}	& Entropy	&   $ 0.61 \pm 0.02 $	&   $ 0.62 \pm 0.03 $	&   $ 0.58 \pm 0.01 $	\\	& & $\max_i q_\phi(f_\theta(x) | y_i)$	&   $ 0.62 \pm 0.04 $	&   $ \mathbf{0.63 \pm 0.04} $	&   $ 0.58 \pm 0.04 $	\\ \cmidrule(r){1-2}
\multirow{2}{*}{	MASS, $\beta$=\num{1e-4}, D	}	& \multirow{2}{*}{$ 32.7 \pm  0.8$}	& Entropy	&   $ 0.61 \pm 0.02 $	&   $ 0.61 \pm 0.03 $	&   $ 0.59 \pm 0.01 $	\\	& & $\max_i q_\phi(f_\theta(x) | y_i)$	&   $ 0.65 \pm 0.04 $	&   $ \mathbf{0.63 \pm 0.04} $	&   $ 0.62 \pm 0.05 $	\\ \cmidrule(r){1-2}
\multirow{2}{*}{	MASS, $\beta$=\num{0}, D	}	& \multirow{2}{*}{$ 32.2 \pm  1.1$}	& Entropy	&   $ 0.63 \pm 0.01 $	&   $ 0.64 \pm 0.02 $	&   $ 0.61 \pm 0.01 $	\\	& & $\max_i q_\phi(f_\theta(x) | y_i)$	&   $ \mathbf{0.65 \pm 0.05} $	&   $ \mathbf{0.64 \pm 0.05} $	&   $ 0.62 \pm 0.06 $	\\ 

\bottomrule
\end{tabular}
\end{small}
\end{center}
\vskip -0.1in
\end{table*}

\begin{table*}
\caption{Out-of-distribution detection metrics for ResNet20 network trained on 40,000 CIFAR-10 images, with SVHN as the out-of-distribution examples.  Full experiment details are in Supplementary Material \ref{supp:ExperimentDetails}.  Values are the mean over 4 training runs with different random seeds, plus or minus the standard deviation. Emboldened values are those for which the maximum observed mean value in the column was within one standard deviation.  Higher values are better.}
\label{tab:ResNet-OOD-40k}
\vskip 0.15in
\begin{center}
\begin{small}
\begin{tabular}{@{}lc|cccc@{}}
\toprule
Training Method & Test Accuracy & Detection Method & AUROC & APR In & APR Out \\
\midrule

\multirow{2}{*}{	SoftmaxCE	}	& \multirow{2}{*}{$ 81.7 \pm  0.3$}	& Entropy	&   $ \mathbf{0.77 \pm 0.02} $	&   $ 0.81 \pm 0.02 $	&   $ 0.70 \pm 0.02 $	\\	& & $\max_i q_\phi(f_\theta(x) | y_i)$	&   $ 0.59 \pm 0.03 $	&   $ 0.62 \pm 0.03 $	&   $ 0.55 \pm 0.02 $	\\ \cmidrule(r){1-2}
\multirow{2}{*}{	VIB, $\beta$=\num{1e-3}	}	& \multirow{2}{*}{$ 81.0 \pm  0.3$}	& Entropy	&   $ 0.74 \pm 0.02 $	&   $ 0.79 \pm 0.02 $	&   $ 0.67 \pm 0.02 $	\\	& & Rate	&   $ 0.55 \pm 0.04 $	&   $ 0.57 \pm 0.05 $	&   $ 0.51 \pm 0.03 $	\\ \cmidrule(r){1-2}
\multirow{2}{*}{	VIB, $\beta$=\num{1e-4}	}	& \multirow{2}{*}{$ 81.2 \pm  0.4$}	& Entropy	&   $ 0.73 \pm 0.02 $	&   $ 0.76 \pm 0.03 $	&   $ 0.66 \pm 0.02 $	\\	& & Rate	&   $ 0.50 \pm 0.02 $	&   $ 0.54 \pm 0.02 $	&   $ 0.48 \pm 0.01 $	\\ \cmidrule(r){1-2}
\multirow{2}{*}{	VIB, $\beta$=\num{1e-5}	}	& \multirow{2}{*}{$ 80.9 \pm  0.5$}	& Entropy	&   $ 0.75 \pm 0.02 $	&   $ 0.80 \pm 0.02 $	&   $ 0.67 \pm 0.02 $	\\	& & Rate	&   $ 0.18 \pm 0.05 $	&   $ 0.34 \pm 0.01 $	&   $ 0.34 \pm 0.01 $	\\ \cmidrule(r){1-2}
\multirow{2}{*}{	VIB, $\beta$=\num{0}	}	& \multirow{2}{*}{$ 81.5 \pm  0.2$}	& Entropy	&   $ \mathbf{0.79 \pm 0.02} $	&   $ \mathbf{0.84 \pm 0.02} $	&   $ \mathbf{0.73 \pm 0.04} $	\\	& & Rate	&   $ 0.11 \pm 0.03 $	&   $ 0.32 \pm 0.01 $	&   $ 0.32 \pm 0.01 $	\\ \cmidrule(r){1-2}
\multirow{2}{*}{	MASS, $\beta$=\num{1e-3}	}	& \multirow{2}{*}{$ 75.8 \pm  0.5$}	& Entropy	&   $ 0.74 \pm 0.03 $	&   $ 0.77 \pm 0.03 $	&   $ 0.69 \pm 0.03 $	\\	& & $\max_i q_\phi(f_\theta(x) | y_i)$	&   $ 0.37 \pm 0.04 $	&   $ 0.43 \pm 0.02 $	&   $ 0.42 \pm 0.02 $	\\ \cmidrule(r){1-2}
\multirow{2}{*}{	MASS, $\beta$=\num{1e-4}	}	& \multirow{2}{*}{$ 80.6 \pm  0.5$}	& Entropy	&   $ \mathbf{0.76 \pm 0.04} $	&   $ \mathbf{0.80 \pm 0.04} $	&   $ \mathbf{0.70 \pm 0.05} $	\\	& & $\max_i q_\phi(f_\theta(x) | y_i)$	&   $ 0.48 \pm 0.06 $	&   $ 0.53 \pm 0.05 $	&   $ 0.47 \pm 0.04 $	\\ \cmidrule(r){1-2}
\multirow{2}{*}{	MASS, $\beta$=\num{1e-5}	}	& \multirow{2}{*}{$ 81.6 \pm  0.4$}	& Entropy	&   $ 0.77 \pm 0.01 $	&   $ \mathbf{0.82 \pm 0.01} $	&   $ \mathbf{0.71 \pm 0.02} $	\\	& & $\max_i q_\phi(f_\theta(x) | y_i)$	&   $ 0.54 \pm 0.03 $	&   $ 0.58 \pm 0.03 $	&   $ 0.51 \pm 0.02 $	\\ \cmidrule(r){1-2}
\multirow{2}{*}{	MASS, $\beta$=\num{0}	}	& \multirow{2}{*}{$ 81.5 \pm  0.2$}	& Entropy	&   $ \mathbf{0.79 \pm 0.03}$	&   $ \mathbf{0.83 \pm 0.02} $	&   $ \mathbf{0.73 \pm 0.03} $	\\	& & $\max_i q_\phi(f_\theta(x) | y_i)$	&   $ 0.49 \pm 0.04 $	&   $ 0.54 \pm 0.04 $	&   $ 0.47 \pm 0.02 $	\\

\bottomrule
\end{tabular}
\end{small}
\end{center}
\vskip -0.1in
\end{table*}

\begin{table*}
\caption{Out-of-distribution detection metrics for ResNet20 network trained on 10,000 CIFAR-10 images, with SVHN as the out-of-distribution examples.  Full experiment details are in Supplementary Material \ref{supp:ExperimentDetails}.  Values are the mean over 4 training runs with different random seeds, plus or minus the standard deviation. Emboldened values are those for which the maximum observed mean value in the column was within one standard deviation.  Higher values are better.}
\label{tab:ResNet-OOD-10k}
\vskip 0.15in
\begin{center}
\begin{small}
\begin{tabular}{@{}lc|cccc@{}}
\toprule
Training Method & Test Accuracy & Detection Method & AUROC & APR In & APR Out \\
\midrule

\multirow{2}{*}{	SoftmaxCE	}	& \multirow{2}{*}{$ 67.5 \pm  0.8$}	& Entropy	&   $ 0.64 \pm 0.02 $	&   $ 0.68 \pm 0.02 $	&   $ 0.58 \pm 0.02 $	\\	& & $\max_i q_\phi(f_\theta(x) | y_i)$	&   $ 0.59 \pm 0.03 $	&   $ 0.61 \pm 0.03 $	&   $ 0.57 \pm 0.04 $	\\ \cmidrule(r){1-2}
\multirow{2}{*}{	VIB, $\beta$=\num{1e-3}	}	& \multirow{2}{*}{$ 66.9 \pm  1.0$}	& Entropy	&   $ 0.59 \pm 0.02 $	&   $ 0.63 \pm 0.04 $	&   $ 0.54 \pm 0.02 $	\\	& & Rate	&   $ \mathbf{0.72 \pm 0.05} $	&   $ \mathbf{0.73 \pm 0.05} $	&   $ \mathbf{0.67 \pm 0.05} $	\\ \cmidrule(r){1-2}
\multirow{2}{*}{	VIB, $\beta$=\num{1e-4}	}	& \multirow{2}{*}{$ 66.4 \pm  0.5$}	& Entropy	&   $ 0.59 \pm 0.01 $	&   $ 0.63 \pm 0.02 $	&   $ 0.54 \pm 0.01 $	\\	& & Rate	&   $ 0.59 \pm 0.07 $	&   $ 0.60 \pm 0.07 $	&   $ 0.56 \pm 0.06 $	\\ \cmidrule(r){1-2}
\multirow{2}{*}{	VIB, $\beta$=\num{1e-5}	}	& \multirow{2}{*}{$ 67.9 \pm  0.8$}	& Entropy	&   $ 0.61 \pm 0.03 $	&   $ 0.65 \pm 0.04 $	&   $ 0.56 \pm 0.03 $	\\	& & Rate	&   $ 0.39 \pm 0.07 $	&   $ 0.42 \pm 0.03 $	&   $ 0.43 \pm 0.04 $	\\ \cmidrule(r){1-2}
\multirow{2}{*}{	VIB, $\beta$=\num{0}	}	& \multirow{2}{*}{$ 67.1 \pm  1.0$}	& Entropy	&   $ 0.64 \pm 0.01 $	&   $ 0.68 \pm 0.01 $	&   $ 0.58 \pm 0.01 $	\\	& & Rate	&   $ 0.32 \pm 0.03 $	&   $ 0.39 \pm 0.01 $	&   $ 0.39 \pm 0.01 $	\\ \cmidrule(r){1-2}
\multirow{2}{*}{	MASS, $\beta$=\num{1e-3}	}	& \multirow{2}{*}{$ 59.6 \pm  0.8$}	& Entropy	&   $ 0.59 \pm 0.02 $	&   $ 0.62 \pm 0.03 $	&   $ 0.56 \pm 0.02 $	\\	& & $\max_i q_\phi(f_\theta(x) | y_i)$	&   $ 0.49 \pm 0.07 $	&   $ 0.46 \pm 0.06 $	&   $ 0.48 \pm 0.08 $	\\ \cmidrule(r){1-2}
\multirow{2}{*}{	MASS, $\beta$=\num{1e-4}	}	& \multirow{2}{*}{$ 66.6 \pm  0.4$}	& Entropy	&   $ 0.62 \pm 0.02 $	&   $ 0.67 \pm 0.02 $	&   $ 0.56 \pm 0.03 $	\\	& & $\max_i q_\phi(f_\theta(x) | y_i)$	&   $ 0.61 \pm 0.05 $	&   $ 0.61 \pm 0.05 $	&   $ 0.60 \pm 0.05 $	\\ \cmidrule(r){1-2}
\multirow{2}{*}{	MASS, $\beta$=\num{1e-5}	}	& \multirow{2}{*}{$ 67.4 \pm  1.0$}	& Entropy	&   $ 0.64 \pm 0.02 $	&   $ 0.69 \pm 0.03 $	&   $ 0.58 \pm 0.01 $	\\	& & $\max_i q_\phi(f_\theta(x) | y_i)$	&   $ 0.61 \pm 0.08 $	&   $ 0.61 \pm 0.06 $	&   $ \mathbf{0.61 \pm 0.09} $	\\ \cmidrule(r){1-2}
\multirow{2}{*}{	MASS, $\beta$=\num{0}	}	& \multirow{2}{*}{$ 67.4 \pm  0.3$}	& Entropy	&   $ 0.64 \pm 0.01 $	&   $ 0.68 \pm 0.02 $	&   $ 0.58 \pm 0.01 $	\\	& & $\max_i q_\phi(f_\theta(x) | y_i)$	&   $ 0.55 \pm 0.05 $	&   $ 0.56 \pm 0.04 $	&   $ 0.54 \pm 0.05 $	\\

\bottomrule
\end{tabular}
\end{small}
\end{center}
\vskip -0.1in
\end{table*}

\begin{table*}
\caption{Out-of-distribution detection metrics for ResNet20 network trained on 2,500 CIFAR-10 images, with SVHN as the out-of-distribution examples.  Full experiment details are in Supplementary Material \ref{supp:ExperimentDetails}.  Values are the mean over 4 training runs with different random seeds, plus or minus the standard deviation. Emboldened values are those for which the maximum observed mean value in the column was within one standard deviation.  Higher values are better.}
\label{tab:ResNet-OOD-2.5k}
\vskip 0.15in
\begin{center}
\begin{small}
\begin{tabular}{@{}lc|cccc@{}}
\toprule
Training Method & Test Accuracy & Detection Method & AUROC & APR In & APR Out \\
\midrule

\multirow{2}{*}{	SoftmaxCE	}	& \multirow{2}{*}{$ 50.0 \pm  0.7$}	& Entropy	&   $ 0.51 \pm 0.01 $	&   $ 0.52 \pm 0.02 $	&   $ 0.49 \pm 0.01 $	\\	& & $\max_i q_\phi(f_\theta(x) | y_i)$	&   $ 0.63 \pm 0.04 $	&   $ 0.62 \pm 0.03 $	&   $ 0.63 \pm 0.04 $	\\ \cmidrule(r){1-2}
\multirow{2}{*}{	VIB, $\beta$=\num{1e-3}	}	& \multirow{2}{*}{$ 49.5 \pm  1.1$}	& Entropy	&   $ 0.48 \pm 0.05 $	&   $ 0.50 \pm 0.05 $	&   $ 0.47 \pm 0.03 $	\\	& & Rate	&   $ \mathbf{0.68 \pm 0.07} $	&   $ \mathbf{0.68 \pm 0.05} $	&   $ \mathbf{0.66 \pm 0.08} $	\\ \cmidrule(r){1-2}
\multirow{2}{*}{	VIB, $\beta$=\num{1e-4}	}	& \multirow{2}{*}{$ 49.4 \pm  1.0$}	& Entropy	&   $ 0.47 \pm 0.05 $	&   $ 0.50 \pm 0.05 $	&   $ 0.47 \pm 0.03 $	\\	& & Rate	&   $ \mathbf{0.66 \pm 0.09} $	&   $ \mathbf{0.65 \pm 0.08} $	&   $ \mathbf{0.66 \pm 0.09} $	\\ \cmidrule(r){1-2}
\multirow{2}{*}{	VIB, $\beta$=\num{1e-5}	}	& \multirow{2}{*}{$ 50.0 \pm  1.1$}	& Entropy	&   $ 0.48 \pm 0.05 $	&   $ 0.49 \pm 0.05 $	&   $ 0.48 \pm 0.03 $	\\	& & Rate	&   $ 0.59 \pm 0.10 $	&   $ 0.55 \pm 0.08 $	&   $ 0.61 \pm 0.09 $	\\ \cmidrule(r){1-2}
\multirow{2}{*}{	VIB, $\beta$=\num{0}	}	& \multirow{2}{*}{$ 50.6 \pm  0.8$}	& Entropy	&   $ 0.51 \pm 0.07 $	&   $ 0.54 \pm 0.08 $	&   $ 0.50 \pm 0.06 $	\\	& & Rate	&   $ \mathbf{0.52 \pm 0.20} $	&   $ 0.53 \pm 0.15 $	&   $ 0.56 \pm 0.17 $	\\ \cmidrule(r){1-2}
\multirow{2}{*}{	MASS, $\beta$=\num{1e-3}	}	& \multirow{2}{*}{$ 38.2 \pm  0.7$}	& Entropy	&   $ 0.48 \pm 0.04 $	&   $ 0.50 \pm 0.04 $	&   $ 0.47 \pm 0.03 $	\\	& & $\max_i q_\phi(f_\theta(x) | y_i)$	&   $ 0.54 \pm 0.11 $	&   $ 0.48 \pm 0.06 $	&   $ 0.51 \pm 0.08 $	\\ \cmidrule(r){1-2}
\multirow{2}{*}{	MASS, $\beta$=\num{1e-4}	}	& \multirow{2}{*}{$ 49.9 \pm  1.0$}	& Entropy	&   $ 0.49 \pm 0.04 $	&   $ 0.51 \pm 0.05 $	&   $ 0.48 \pm 0.03 $	\\	& & $\max_i q_\phi(f_\theta(x) | y_i)$	&   $ \mathbf{0.72 \pm 0.08} $	&   $ \mathbf{0.71 \pm 0.08} $	&   $ \mathbf{0.73 \pm 0.08} $	\\ \cmidrule(r){1-2}
\multirow{2}{*}{	MASS, $\beta$=\num{1e-5}	}	& \multirow{2}{*}{$ 50.1 \pm  0.5$}	& Entropy	&   $ 0.50 \pm 0.06 $	&   $ 0.51 \pm 0.06 $	&   $ 0.49 \pm 0.04 $	\\	& & $\max_i q_\phi(f_\theta(x) | y_i)$	&   $ \mathbf{0.69 \pm 0.10} $	&   $ \mathbf{0.68 \pm 0.10} $	&   $ \mathbf{0.70 \pm 0.10} $	\\ \cmidrule(r){1-2}
\multirow{2}{*}{	MASS, $\beta$=\num{0}	}	& \multirow{2}{*}{$ 50.2 \pm  1.0$}	& Entropy	&   $ 0.51 \pm 0.06 $	&   $ 0.53 \pm 0.06 $	&   $ 0.50 \pm 0.04 $	\\	& & $\max_i q_\phi(f_\theta(x) | y_i)$	&   $ \mathbf{0.69 \pm 0.07} $	&   $ \mathbf{0.68 \pm 0.07} $	&   $ \mathbf{0.68 \pm 0.07} $	\\

\bottomrule
\end{tabular}
\end{small}
\end{center}
\vskip -0.1in
\end{table*}

\subsection{Does MASS Learning finally solve the mystery of why stochastic gradient descent with the cross entropy loss works so well in deep learning?}
We do not believe so.  Figure \ref{fig:trainingcurves} shows how the values of the three terms in $\widehat{\mathcal{L}}_{MASS}$ change as the SmallMLP network trains on the CIFAR-10 dataset using either the SoftmaxCE training or MASS Learning.  Despite achieving similar accuracies, the SoftmaxCE training method does not seem to be implicitly performing MASS Learning, based on the differing values of the entropy (orange) and Jacobian (green) terms between the two methods as training progresses.

\begin{figure*}
\begin{center}
\centerline{\includegraphics[width=\textwidth]{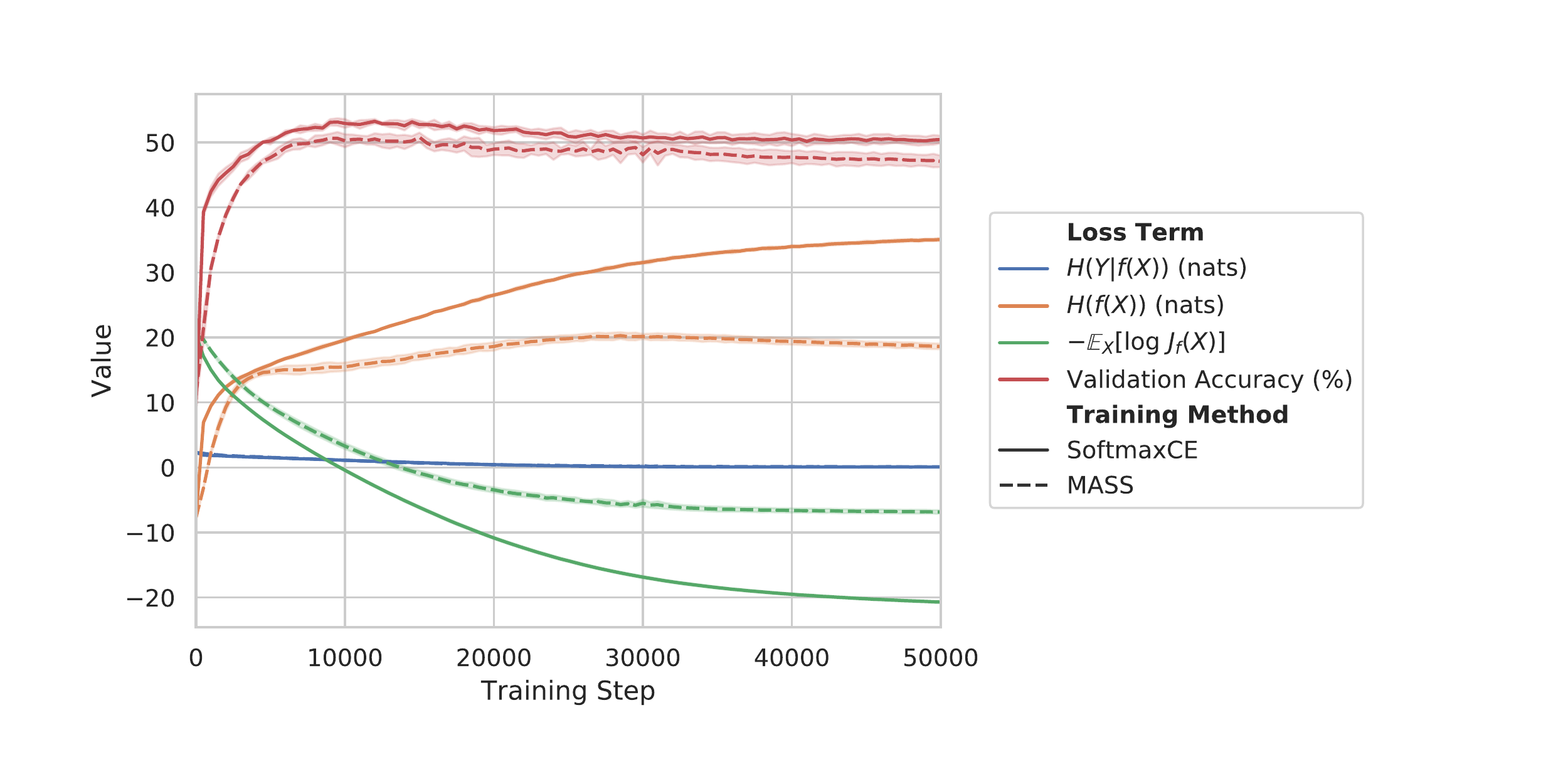}}
\caption{Estimated value of each term in the MASS Learning loss function, $\mathcal{L}_{MASS}(f)  = H(Y|f(X)) + \beta H(f(X)) - \beta \E_X[\log J_{f}(X)]$, during training of the SmallMLP network on the CIFAR-10 dataset.  The MASS training was performed with $\beta = 0.001$, though the plotted values are for the terms without being multiplied by the $\beta$ coefficients.  The values of these terms for SoftmaxCE training are estimated using a distribution $q_\phi(f_\theta(x) | y)$, with the distribution parameters $\phi$ being estimated at each training step by MLE over the training data.}
\label{fig:trainingcurves}
\end{center}
\end{figure*}

\section{Discussion}
MASS Learning is a new approach to representation learning that performs well on classification accuracy, regularization, and uncertainty quantification benchmarks, despite not being directly formulated for any of these tasks.  It shows particularly strong performance in improving uncertainty quantification.

There are several potential ways to improve MASS Learning.  Starting at the lowest level: it is likely that we did not manage to minimize $\widehat{\mathcal{L}}_{MASS}$ anywhere close to the extent possible in our experiments, given the minimal hyperparameter tuning we performed.  In particular, we noticed that the initialization of the variational distribution played a large role in performance, but we were not able to fully explore it.  

Moving a level higher, it may be that we are effectively minimized $\widehat{\mathcal{L}}_{MASS}$, but that $\widehat{\mathcal{L}}_{MASS}$ is not a useful empirical approximation or upper bound to $\mathcal{L}_{MASS}$.  This could be due to an insufficiently expressive variational distribution, or simply that the quantities in $\widehat{\mathcal{L}}_{MASS}$ require more data to approximate well than our datasets contained.

At higher levels still, it may be the case that the Lagrangian formulation of Theorem \ref{thm:MASS} as $\mathcal{L}_{MASS}$ is impractical for finding minimal achievable sufficient statistics.  Or it may be that the difference between minimal and minimal achievable sufficient statistics is relevant for performance on machine learning tasks.  Or it may simply be that framing machine learning as a problem of finding minimal sufficient statistics is not productive. 

Finally, while we again note that more work is needed to reduce the computational cost of our implementation of MASS Learning, we believe the concept of MASS learning, and the concepts of minimal achievability and Conserved Differential Information we introduce along with it, are beneficial to the theoretical understanding of representation learning.

\section*{Acknowledgements}

We would like to thank Georg Pichler, Thomas Vidick, Alex Alemi, Alessandro Achille, and Joseph Marino  for useful discussions.

\bibliography{main}
\bibliographystyle{icml2019}

\newpage
\onecolumn

\section{Supplementary Material}
\subsection{Standard Definition of Minimal Sufficient Statistics}
\label{supp:MSS}
The most common phrasing of the definition of \emph{minimal sufficient statistic} is:
\begin{definition}[Minimal Sufficient Statistic]
\label{def:tradMSS}
A sufficient statistic $f(X)$ for $Y$ is \emph{minimal} if for any other sufficient statistic $h(X)$ there exists a measurable function $g$ such that $f=g\circ h$ almost everywhere.
\end{definition}

Some references do not explicitly mention the ``measurability'' and ``almost everywhere'' conditions on $g$, but since we are in the probabilistic setting it is this definition of $f=g\circ h$ that is meaningful.

Our preferred phrasing of the definition of \emph{minimal sufficient statistic}, which we use in our Introduction, is:

\begin{definition}[Minimal Sufficient Statistic]
\label{def:betterMSS}
A sufficient statistic $f(X)$ for $Y$ is \emph{minimal} if for any measurable function $g$, $g(f(X))$ is no longer sufficient for $Y$ unless $g$ is invertible almost everywhere (i.e.  there exist a measurable function $g^{-1}$ and a set $\mathcal{A}$ such that $g^{-1}(g(x))=x$ for all $x \in \mathcal{A}$ and the event $\{ X \in \mathcal{A}^c\}$ has probability zero).
\end{definition}

The equivalence of Definition \ref{def:tradMSS} and Definition \ref{def:betterMSS} is given by the following lemma:

\begin{lemma}
Assume that there exists a minimal sufficient statistic $h(X)$ for $Y$ by Definition \ref{def:tradMSS}.
Then a sufficient statistic $f(X)$ is minimal in the sense of Definition \ref{def:tradMSS} if and only if it is minimal in the sense of Definition \ref{def:betterMSS}.
\end{lemma}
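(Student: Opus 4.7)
The plan is to prove the two implications separately. The hypothesis that a traditional minimal sufficient statistic $h(X)$ exists is needed only for the direction from Definition \ref{def:betterMSS} to Definition \ref{def:tradMSS}; the other direction follows directly from the definitions.

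For the direction Definition \ref{def:tradMSS} $\Rightarrow$ Definition \ref{def:betterMSS}, I would suppose $f(X)$ is traditionally minimal and let $g$ be any measurable function for which $g(f(X))$ remains sufficient. Applying traditional minimality of $f$ to the sufficient statistic $g \circ f$ yields a measurable $g'$ with $f = g' \circ g \circ f$ almost surely. Hence $g' \circ g$ is the identity on the image of $f$ up to a set of probability zero, which is exactly the ``invertible almost everywhere'' condition in Definition \ref{def:betterMSS}. So $f$ satisfies the preferred minimality criterion.

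For the converse direction, let $f(X)$ be preferred-minimal and let $h(X)$ be the hypothesized traditional minimal sufficient statistic. Applying traditional minimality of $h$ to the sufficient statistic $f$ gives a measurable $g_f$ with $h = g_f \circ f$ almost surely. Because $h$ itself is sufficient, the random variable $g_f(f(X))$ is sufficient, so preferred minimality of $f$ forces $g_f$ to be invertible almost everywhere, yielding a measurable almost-inverse $g_f^{-1}$. For any other sufficient statistic $s(X)$, traditional minimality of $h$ gives a measurable $g_s$ with $h = g_s \circ s$ almost surely, and composing produces $f = g_f^{-1} \circ g_s \circ s$ almost surely. This is exactly the traditional minimality condition for $f$.

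The main obstacle will be the almost-everywhere bookkeeping. Each of the factorizations $h = g_f \circ f$, $h = g_s \circ s$, and the identity $g_f^{-1} \circ g_f = \mathrm{id}$ on the image of $f$ holds only off some null set, and the ``invertible almost everywhere'' clause in Definition \ref{def:betterMSS} is defined relative to the distribution of the underlying random variable rather than pointwise. I would intersect the relevant null sets before composing the functions and check that $g_f^{-1} \circ g_s$ is itself measurable, so that the final equality $f = g_f^{-1} \circ g_s \circ s$ really qualifies as an almost-everywhere factorization in the sense of Definition \ref{def:tradMSS}.
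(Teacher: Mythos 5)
Your proposal is correct and takes essentially the same route as the paper's proof: the forward direction applies traditional minimality of $f$ to the composite sufficient statistic $g\circ f$ to extract the almost-everywhere inverse, and the reverse direction factors both $f$ and the arbitrary sufficient statistic through the hypothesized traditional minimal $h$, using preferred minimality of $f$ to invert the factor $h = g_f\circ f$. Your explicit remark that $g_f(f(X))$ is sufficient because it equals $h(X)$ almost surely, and your attention to the null-set bookkeeping, make explicit steps the paper leaves implicit, but the argument is the same.
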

\begin{proof}
We first assume that $f(X)$ is minimal in the sense of Definition \ref{def:tradMSS}. 
Let $g$ be any measurable function such that $g(f(X))$ is sufficient for $Y$.
By the minimality (Def.~\ref{def:tradMSS}) of $f$ there must exist a measurable function $\tilde{g}$ such that $\tilde{g}(g(f(x)))=f(x)$ almost everywhere. 
This proves that $f$ is minimal  in the sense of Definition \ref{def:betterMSS}.

Now assume that $f(X)$ is minimal  in the sense of Definition \ref{def:betterMSS} and let $\tilde{f}(X)$ be another sufficient statistic.
Because $h$ is minimal (Def.~\ref{def:tradMSS}), there exist $g_1$ such that $h=g_1\circ \tilde{f}$ almost everywhere and $g_2$ such that $h=g_2\circ f$ almost everywhere.
Because $f$ is minimal  (Def.~\ref{def:betterMSS}), $g_2$ must be one-to-one almost everywhere, i.e. there exists a $\tilde{g}_2$ such that $\tilde{g}_2 \circ h = \tilde{g}_2 \circ g_2 \circ f=f$ almost everywhere.
In turn, we obtain that $\tilde{g}_2 \circ g_1\circ \tilde{f}=f$ almost everywhere, and since $\tilde{f}$ was arbitrary this proves the minimality of $f$  in the sense of Definition \ref{def:tradMSS}.
\end{proof}

\subsection{The Mutual Information Between the Input and Output of a Deep Network is Infinite}
\label{supp:MIInfinite}

Typically the mutual information between continuous random variables $X$ and $Y$ is given by 
\[
I(X,Y) = \int p(x,y) \log \frac{p(x,y)}{p(x)p(y)} \intd x \intd y,
\]
but this quantity is only defined when the joint density $p(x,y)$ is integrable, which it is not in the case that $Y = f(X)$.  (The technical term for $p(x, y)$ in this case is a ``singular distribution''.)  Instead, to compute $I(X, f(X))$ we must refer to the ``master definition'' of mutual information \citep{cover_elements_2006}, which is 
\be\label{MasterMIDef}
I(X,Y) = \sup_{\mathcal{P},\mathcal{Q}} I([X]_\mathcal{P}, [Y]_\mathcal{Q}),
\ee
where $\mathcal{P}$ and $\mathcal{Q}$ are finite partitions of the range of $X$ and $Y$, respectively, and $[X]_\mathcal{P}$ is the random variable obtained by quantizing $X$ using partition $\mathcal{P}$, and analogously for $[Y]_\mathcal{Q}$.

From this definition, we can prove the following Lemma:

\begin{lemma}\label{InfiniteMI}

If $X$ and $Y$ are continuous random variables, and there are open sets $O_X$ and $O_Y$ in the support of $X$ and $Y$, respectively, such that $y = f(x)$ for $x \in O_X$ and $y \in O_Y$, then $I(X,Y) = \infty$.  

This includes all $X$ and $Y$ where $Y = f(X)$ for an $f$ that is continuous somewhere on its domain, e.g., any deep network (considered as a function from an input vector to an output vector).
\end{lemma}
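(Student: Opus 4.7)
The plan is to apply the master definition of mutual information~(\ref{MasterMIDef}) directly, by constructing for each $n$ finite partitions $\mathcal{P}_n$ and $\mathcal{Q}_n$ of the ranges of $X$ and $Y$ such that the discrete mutual information $I([X]_{\mathcal{P}_n}, [Y]_{\mathcal{Q}_n})$ grows at least like $p \log n$, where $p := \Prob(X \in O_X)$ is positive because $O_X$ is open in the support of $X$.

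First I would observe that on the event $E := \{X \in O_X\}$ we have $Y = f(X) \in O_Y$ almost surely, and that the conditional law $\Prob(Y \in \cdot \mid E)$ must be nonatomic on $O_Y$: an atom $y_0$ there would force $\Prob(Y = y_0) \geq p \cdot \Prob(Y = y_0 \mid E) > 0$, contradicting continuity of $Y$. Using the standard fact that a nonatomic Borel probability measure admits an equal-mass partition of any finite order, I split $O_Y = B_1 \sqcup \dots \sqcup B_n$ with $\Prob(Y \in B_i \mid E) = 1/n$, and then pull back to $A_i := O_X \cap f^{-1}(B_i)$ via any Borel representative of $f$. Because $f(O_X) \subseteq O_Y$, the $A_i$ partition $O_X$. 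Setting
\[
\mathcal{P}_n = \{A_1,\dots,A_n, O_X^c\}, \qquad \mathcal{Q}_n = \{B_1,\dots,B_n, O_Y^c\},
\]
I obtain quantized variables $U := [X]_{\mathcal{P}_n}$ and $V := [Y]_{\mathcal{Q}_n}$ in $\{1,\dots,n+1\}$ with $\Prob(U=i) = p/n$ for $i \leq n$ and $\Prob(U=n+1) = 1-p$.

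The structural crux is that on $E$ we have $X \in A_i$ if and only if $Y \in B_i$, so $U = V$; on $E^c$ we have $U = n+1$ deterministically. Hence $H(V \mid U = i) = 0$ for $i \leq n$, giving $H(V \mid U) = (1-p)\,H(V \mid U = n+1)$. Simultaneously, the marginal law of $V$ is the mixture $P_V = p \cdot \text{Unif}\{1,\dots,n\} + (1-p) \cdot P_{V \mid U = n+1}$, so concavity of Shannon entropy gives $H(V) \geq p \log n + (1-p)\,H(V \mid U = n+1)$. Subtracting, $I(U,V) = H(V) - H(V \mid U) \geq p \log n$, which diverges as $n \to \infty$, and the master definition then yields $I(X,Y) = \infty$.

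The main obstacle I expect is resisting a naive Fano-type estimate for $H(V \mid U)$, which only delivers $I(U,V) \geq (2p-1)\log n + O(1)$ and is vacuous when $p \leq 1/2$. The key observation is that the exact conditional-entropy term $(1-p)\,H(V \mid U = n+1)$ reappears, by concavity of entropy applied to the mixture representation of $P_V$, as a lower-bound contribution to $H(V)$, so it cancels cleanly in the mutual information and leaves the dimension-free quantity $p \log n$. Two minor technical points are the existence of an equal-mass Borel partition of $O_Y$ under the nonatomic measure $\Prob(Y \in \cdot \mid E)$, and choosing a measurable representative for $f$ on $O_X$; both are standard.
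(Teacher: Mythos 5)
Your proof is correct and follows essentially the same route as the paper's: both invoke the master (supremum-over-partitions) definition, build an equal-probability $n$-cell partition of $O_Y$ using nonatomicity of the law of $Y$, pull it back through $f$ to partition $O_X$, adjoin the complements as a final cell, and let $n\to\infty$ to extract a $\log n$ divergence. Your mixture-plus-concavity bookkeeping for the complement cell is a slightly tidier (and marginally more careful) way to handle the mass of $Y$ outside $f(O_X)$ than the paper's direct evaluation of the discrete mutual-information sum, but the underlying construction is identical.
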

\begin{proof}

Suppose $X$ and $Y$ satisfy the conditions of the lemma.  
Let $O_X$ and $O_Y$ be open sets with $f(O_X) = O_Y$ and $\Prob[X\in O_X]=:\delta>0$, which exist by the lemma's assumptions.  
Then let $\mathcal{P}_{O_Y}^n$ be a partition of $O_Y$ into $n$ disjoint sets. 
Because $Y$ is continuous and hence does not have any atoms, we may assume that the probability of $Y$ belonging to each element of $\mathcal{P}_{O_Y}^n$ is equal to the same nonzero value $\delta/n$.  
Denote by $\mathcal{P}_{O_X}^n$ the partition of $O_X$ into $n$ disjoint sets, where each set in $\mathcal{P}_{O_X}^n$ is the preimage of one of the sets in $\mathcal{P}_{O_Y}^n$.  
We can construct  partitions of the whole domains of $X$ and $Y$ as $\mathcal{P}_{O_X}^n \cup O_X^c$ and $\mathcal{P}_{O_Y}^n \cup O_Y^c$, respectively.
Using these partitions in \eqref{MasterMIDef}, we obtain
\begin{align*}
I(X,Y)
& \geq (1-\delta) \log (1-\delta) + \sum_{A \in [X]_{\mathcal{P}_{O_X}^n}}  \Prob[X\in A, Y\in f(A)]\log \frac{\Prob[X\in A, Y\in f(A)]}{\Prob[X\in A]\Prob[Y\in f(A)]} \\
& = (1-\delta) \log (1-\delta) + n \frac{\delta}{n} \log \frac{\frac{\delta}{n}}{\frac{\delta}{n}\frac{\delta}{n}} \\
& = (1-\delta) \log (1-\delta) +  \delta \log \frac{n}{ \delta }.
\end{align*}
By letting $n$ go to infinity, we can see that the supremum in Eq.\ \ref{MasterMIDef} is infinity.
\end{proof}

\subsection{The Change of Variables Formula for Non-invertible Mappings}
\label{supp:coarea}

The change of variables formula is widely used in machine learning and is key to recent results in density estimation and generative modeling like normalizing flows \cite{rezende_variational_2015}, NICE \cite{dinh_nice:_2014}, and Real NVP \cite{dinh_density_2016}.  But all uses of the change of variables formula in the machine learning literature that we are aware of use it with respect to bijective mappings between random variables, despite the formula also being applicable to non-invertible mappings between random variables.
To address this gap, we offer the following brief tutorial.

The familiar form of the change of variables formula for a random variable $X$ with density $p(x)$ and a bijective, differentiable function $f \colon \R^d \to \R^d$ is
\be\label{eq:coarea1}
    \int_{\R^d} p(x) J_f(x)  \, \intd x = \int_{\R^d} p(f^{-1}(y)) \, \intd y.
\ee
where $J_f(x) = \big\lvert \det  \frac{\partial f(x)}{\partial x^\trans}\big\rvert$.

A slightly more general phrasing of Equation $\ref{eq:coarea1}$ is
\be\label{eq:coarea2}
    \int_{f^{-1}(\mathcal{B})} g(x) J_f(x)  \, \intd x = \int_{\mathcal{B}} g(f^{-1}(y)) \, \intd y.
\ee
where $g\colon \R^d \to \R$ is any non-negative measurable function, and $\mathcal{B}\subseteq \R^d$ is any measurable subset of $\R^d$.

We can extend Equation \ref{eq:coarea2} to work in the case that $f$ is not invertible.  To do this, we must address two issues.
First, if $f$ is not invertible, then $f^{-1}(y)$ is not a single point but rather a set.
Second, if $f$ is not invertible, then the Jacobian matrix $\frac{\partial f(x)}{\partial x^\trans}$ may not be square, and thus has no well defined determinant. 
Both issues can be resolved and lead to the following change of variables theorem \cite{krantz_geometric_2009}, which is based on the so-called coarea formula \cite{federer_geometric_1969}.
\begin{theorem}\label{thm:coareaCOV}
    Let $f \colon \R^d \to \R^r$ with $r\leq d$ be a differentiable function, $g\colon \R^d \to \R$  a non-negative measurable function, $\mathcal{B}\subseteq \R^d$ a measurable set,  and $J_f(x)=\sqrt{\det \left(\frac{\partial f(x)}{\partial x^\trans}\left(\frac{\partial f(x)}{\partial x^\trans}\right)^\trans\right)}$.
    Then
    \be\label{eq:coarea3}
        \int_{f^{-1}(\mathcal{B})} g(x) J_f(x)  \, \intd x = \int_{\mathcal{B}} \int_{f^{-1}(y)} g(x) \, \intd \Hm{d-r}(x) \, \intd y.
    \ee
where $\Hm{d-r}$ is the $(d-r)$-dimensional Hausdorff measure (one can think of this as a measure for lower-dimensional structures in high-dimensional space, e.g. the area of 2-dimensional surfaces in 3-dimensional space).\footnote{In what follows, we will sometimes replace $g$ by $g/J_f$ such that the Jacobian appears on the right-hand side. 
Furthermore, we will not only use non-negative $g$. 
This can be justified by splitting $g$ into positive and negative parts provided that either part results in a finite integral.}
\end{theorem}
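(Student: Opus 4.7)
The plan is to derive Theorem \ref{thm:coareaCOV} as an almost immediate corollary of the classical coarea formula of Federer, which states that for a Lipschitz map $f\colon \R^d \to \R^r$ with $r\leq d$ and any non-negative measurable $u\colon \R^d\to \R$,
\[
\int_{\R^d} u(x)\, J_f(x)\, \intd x \;=\; \int_{\R^r} \int_{f^{-1}(y)} u(x)\, \intd \Hm{d-r}(x)\, \intd y.
\]
I would take this as the main ingredient (citing Federer 1969 and Krantz--Parks 2009, as the surrounding text already does) rather than attempting to reprove it from scratch.

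Given the coarea formula, the derivation of \eqref{eq:coarea3} proceeds by the simple substitution $u(x) = g(x)\,\mathbf{1}_{f^{-1}(\mathcal{B})}(x)$. The left-hand side then reads $\int_{f^{-1}(\mathcal{B})} g(x)\, J_f(x)\, \intd x$ directly. For the right-hand side I would use the key observation that the indicator $\mathbf{1}_{f^{-1}(\mathcal{B})}$ is constant along each fiber $f^{-1}(y)$: it equals $1$ whenever $y\in \mathcal{B}$ and $0$ otherwise, because $f^{-1}(\mathcal{B}) = \bigsqcup_{y\in \mathcal{B}} f^{-1}(y)$. Pulling this indicator out of the inner integral collapses the outer integral onto $\mathcal{B}$ and yields the stated identity.

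For readers who want a sketch of the coarea formula itself, I would outline the standard three-step plan from geometric measure theory: (i) handle the linear case $f = L$ via the singular value decomposition $L = U\Sigma V^\trans$, reducing the identity to a Fubini calculation in which $J_L$ appears as the product of the non-zero singular values of $L$; (ii) extend to differentiable $f$ by localizing with a partition of unity and approximating by its linearization, using Sard's theorem to control the critical set $\{x : J_f(x) = 0\}$ on which both sides of \eqref{eq:coarea3} vanish; (iii) pass from indicator functions to general non-negative measurable $g$ by linearity and monotone convergence. The signed-$g$ extension mentioned in the footnote follows by splitting $g = g^+ - g^-$ whenever either part yields a finite integral.

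The main obstacle is squarely the coarea formula in step (ii) of the sketch above, which is a genuinely deep result and the reason the authors simply quote it. Once the coarea formula is granted, the proof of Theorem \ref{thm:coareaCOV} is the one-line fiber-indicator argument; the only subtlety worth flagging is that $\mathcal{B}$ must be measurable so that $f^{-1}(\mathcal{B})$ is measurable and the substitution $u = g\,\mathbf{1}_{f^{-1}(\mathcal{B})}$ is legal, which is precisely the hypothesis given.
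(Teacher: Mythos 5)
Your proposal is correct and matches the paper's treatment: the paper gives no proof of Theorem \ref{thm:coareaCOV} at all, simply quoting it from Krantz--Parks and Federer, and your reduction of the stated form to the global coarea formula via the substitution $u = g\,\mathbf{1}_{f^{-1}(\mathcal{B})}$ (using that the indicator is constant on fibers) is the standard, correct one-line argument. If anything you supply slightly more detail than the paper does; the only nitpick is that $\mathcal{B}$ should live in $\R^r$ (the codomain), a typo in the paper's statement that your argument implicitly corrects.
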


We see in Theorem \ref{thm:coareaCOV} that Equation \ref{eq:coarea3} looks a lot like Equations \ref{eq:coarea1} and \ref{eq:coarea2}, but with $f^{-1}(y)$ replaced by an integral over the set $f^{-1}(y)$, which for almost every $y$ is a $(d-r)$-dimensional set.  And if $f$ in Equation \ref{eq:coarea3} happens to be bijective, Equation \ref{eq:coarea3} reduces to Equation \ref{eq:coarea2}.

We also see that the Jacobian determinant in Equation \ref{eq:coarea2} was replaced by the so-called $r$-dimensional Jacobian \[\sqrt{\det \left(\frac{\partial f(x)}{\partial x^\trans}\left(\frac{\partial f(x)}{\partial x^\trans}\right)^\trans\right)}\] in Equation \ref{eq:coarea3}. 
A word of caution is in order, as the $r$-dimensional Jacobian does not have the same nice properties for concatenated functions as does the Jacobian in the bijective case.  
In particular, we cannot calculate $J_{f_2 \circ f_1}$ based on the values of $J_{f_1}$ and $J_{f_2}$ because the product $\frac{\partial f_2(x)}{\partial x^\trans}\frac{\partial f_1(x)}{\partial x^\trans}\left(\frac{\partial f_2(x)}{\partial x^\trans}\frac{\partial f_1(x)}{\partial x^\trans}\right)^\trans$ does not decompose into a product of $\frac{\partial f_2(x)}{\partial x^\trans}\left(\frac{\partial f_2(x)}{\partial x^\trans}\right)^\trans$ and $\frac{\partial f_1(x)}{\partial x^\trans}\left(\frac{\partial f_1(x)}{\partial x^\trans}\right)^\trans$.
In other words, the trick used in techniques like normalizing flows and NICE to compute determinants of deep networks for use in the change of variables formula by decomposing the network's Jacobian into the product of layerwise Jacobians does not work straightforwardly in the case of non-invertible mappings.

\subsection{Motivation for Conserved Differential Information}
\label{supp:CDIDerivation}

First, we present an alternative definition of conditional entropy that is meaningful for singular distributions (e.g., the joint distribution $p(X, f(X))$ for a function $f$).  More information on this definition can be found in Koliander et al. \yrcite{koliander_entropy_2016}.

\subsubsection{Singular Conditional Entropy}
Assume that the random variable $X$ has a probability density function $p_X(x)$ on $\R^d$.
For a given differentiable function $f \colon \R^d \to \R^r$ ($r\leq d$), we want to analyze the conditional differential entropy $H(X|f(X))$. 
Following  Koliander et al. \yrcite{koliander_entropy_2016}, we define this quantity as:
\be\label{eq:condentropygen}
H(X|f(X))= - \int_{\R^r} p_{f(X)}(y) \int_{f^{-1}(y)} \theta^{d-r}_{\text{Pr}\{X\in \cdot|f(X)=y\}}(x) \log \big(\theta^{d-r}_{\text{Pr}\{X\in \cdot|f(X)=y\}}(x)\big) \, \intd\Hm{d-r}(x)\, \intd y
\ee
where $\Hm{d-r}$ denotes $(d-r)$-dimensional Hausdorff measure.
The function $p_{f(X)}$ is the probability density function of the random variable $f(X)$.
Although $\theta^{d-r}_{\text{Pr}\{X\in \cdot|f(X)=y\}}$ can also be interpreted as a probability density, it is not the commonly used density with respect to Lebesgue measure (which does not exist for $X|f(X)=y$) but a density with respect to a lower-dimensional Hausdorff measure.
We will analyze the two functions $p_{f(X)}$ and $\theta^{d-r}_{\text{Pr}\{X\in \cdot|f(X)=y\}}$ in more detail.
The density $p_{f(X)}$ is defined by the relation
\be
\int_{f^{-1}(\mathcal{B})} p_X(x) \, \intd x = \int_{\mathcal{B}}p_{f(X)}(y) \, \intd y\,,
\ee
which has to hold for every measurable set $\mathcal{B}\subseteq \R^r$.
Using the coarea formula (or the related change-of-variables theorem), we see that
\be
\int_{f^{-1}(\mathcal{B})} p_X(x) \, \intd x = \int_{\mathcal{B}}\int_{f^{-1}(y)} \frac{p_X(x)}{J_f(x)}\, \intd \Hm{d-r}(x) \, \intd y\,,
\ee
where $J_f(x)=\sqrt{\det \left(\frac{\partial f(x)}{\partial x^\trans}\left(\frac{\partial f(x)}{\partial x^\trans}\right)^\trans\right)}$ is the $r$-dimensional Jacobian determinant.
Thus, we identified
\be \label{eq:axdensitygen}
p_{f(X)}(y) = \int_{f^{-1}(y)} \frac{p_X(x)}{J_f(x)}\, \intd \Hm{d-r}(x)\,.
\ee

The second function, namely $\theta^{d-r}_{\text{Pr}\{X\in \cdot|f(X)=y\}}$, is the Radon-Nikodym derivative of the conditional probability $\text{Pr}\{X\in \cdot|f(X)=y\}$ with respect to $\Hm{d-r}$ restricted to the set where $X|f(X)=y$ has positive probability (in the end, this will be the set $f^{-1}(y)$). 
To understand this function, we have to know something about the conditional distribution of $X$ given $f(X)$.
Formally, a (regular) conditional probability $\text{Pr}\{X\in \cdot|f(X)=y\}$ has to satisfy three conditions:
\begin{itemize}
\item $\text{Pr}\{X\in \cdot|f(X)=y\}$ is a probability measure for each fixed $y\in \R^r$.
\item $\text{Pr}\{X\in \mathcal{A}|f(X)=\cdot\}$ is measurable for each fixed measurable set $\mathcal{A}\subseteq \R^d$.
\item For measurable sets $\mathcal{A}\subseteq \R^d$ and $\mathcal{B}\subseteq \R^r$, we have
\be\label{eq:condprobgen}
\text{Pr}\{(X,f(X))\in \mathcal{A}\times \mathcal{B}\} = \int_{\mathcal{B}} \text{Pr}\{X\in \mathcal{A}|f(X)=y\} p_{f(X)}(y) \, \intd y\,.
\ee
\end{itemize}
In our setting, \eqref{eq:condprobgen} becomes 
\be\label{eq:condprobourgen}
\int_{\mathcal{A}\cap f^{-1}(\mathcal{B})} p_X(x) \, \intd x= \int_{\mathcal{B}} \text{Pr}\{X\in \mathcal{A}|f(X)=y\} p_{f(X)}(y) \, \intd y\,.
\ee
Choosing 
\be
\text{Pr}\{X\in \mathcal{A}|f(X)=y\} = \frac{1}{p_{f(X)}(y)}\int_{\mathcal{A}\cap f^{-1}(y)} \frac{p_X(x)}{J_f(x)}\, \intd \Hm{d-r}(x)\,,
\ee
the right-hand side in \eqref{eq:condprobourgen} becomes 
\ba 
\int_{\mathcal{B}} \text{Pr}\{X\in \mathcal{A}|f(X)=y\} p_{f(X)}(y) \, \intd y
& = \int_{\mathcal{B}} \int_{\mathcal{A}\cap f^{-1}(y)} \frac{p_X(x)}{J_f(x)}\, \intd \Hm{d-r}(x) \, \intd y \notag \\
& = \int_{\mathcal{A}\cap f^{-1}(\mathcal{B})} p_X(x) \, \intd x\,,
\ea
where the final equality is again an application of the coarea formula.
Thus, we identified 
\be\label{eq:conddensitygen}
\theta^{d-r}_{\text{Pr}\{X\in \cdot|f(X)=y\}}(x) =  \frac{p_X(x)}{J_f(x) \, p_{f(X)}(y)}\,.
\ee

Although things might seem complicated up to this point, they simplify significantly once we put everything together.
In particular, inserting  \eqref{eq:conddensitygen} into \eqref{eq:condentropygen}, we obtain
\ba 
H(X|f(X)) 
& = -\int_{\R^r} p_{f(X)}(y) \int_{f^{-1}(y)} \frac{p_X(x)}{J_f(x) \, p_{f(X)}(y)} \log \bigg(\frac{p_X(x)}{J_f(x) \, p_{f(X)}(y)}\bigg) \, \intd\Hm{d-r}(x)\, \intd y  \notag \\
& = -\int_{\R^r} \int_{f^{-1}(y)} \frac{p_X(x)}{J_f(x) } \log \bigg(\frac{p_X(x)}{J_f(x) \, p_{f(X)}(y)}\bigg) \, \intd\Hm{d-r}(x)\, \intd y \notag \\
& = -\int_{\R^d}   p_X(x)  \log \bigg(\frac{p_X(x)}{J_f(x) \, p_{f(X)}(f(x))}\bigg)  \, \intd x \label{eq:applycoareagen} \\
& = H(X) + \int_{\R^d}   p_X(x)  \log \big( J_f(x)  p_{f(X)}(f(x))\big)  \, \intd x  \notag \\
& = H(X) + \int_{\R^d}   p_X(x)  \log \big( p_{f(X)}(f(x))\big)  \, \intd x 
 + \int_{\R^d}   p_X(x)  \log \big( J_f(x) \big)  \, \intd x \notag \\
& = H(X) + \int_{\R^r} \int_{f^{-1}(y)}  \frac{p_X(x)}{J_f(x) }  \log \big(p_{f(X)}(f(x))\big)  \, \intd\Hm{d-r}(x)\, \intd y 
+ \E\big[\log \big( J_f(X) \big)\big] \label{eq:applycoareagen2} \\
& =  H(X) + \int_{\R^r} \int_{f^{-1}(y)}  \frac{p_X(x)}{J_f(x) }   \, \intd\Hm{d-r}(x)  \log \big(p_{f(X)}(y)\big)\, \intd y
+  \E\big[\log \big( J_f(X) \big)\big]  \notag \\
& = H(X) + \int_{\R^r} p_{f(X)}(y)  \log \big(p_{f(X)}(y)\big) \, \intd y
+ \E\big[\log \big( J_f(X) \big)\big]  \notag \\
& = H(X) -H(f(X))
+  \E\big[\log \big( J_f(X) \big)\big] 
\label{eq:condentropy2gen}
\ea
where \eqref{eq:applycoareagen} and \eqref{eq:applycoareagen2} hold by the coarea formula.

So, altogether we have that for a random variable $X$ and a function $f$, the singular conditional entropy between $X$ and $f(X)$ is
\be
H(X|f(X)) = H(X) -H(f(X))
+  \E\big[\log \big( J_f(X) \big)\big].
\ee
This quantity can loosely be interpreted as being the difference in differential entropies between $X$ and $f(X)$ but with an additional term that corrects for any ``uninformative'' scaling that $f$ does.

\subsubsection{Conserved Differential Information}

For random variables that are not related by a deterministic function, mutual information can be expanded as
\be 
I(X,Y)= H(X)-H(X|Y)
\ee
where $H(X)$ and $H(X|Y)$ are differential entropy and conditional differential entropy, respectively.
As we would like to measure information between random variables that are deterministically dependent, we can mimic this behavior by defining for a Lipschitz continuous mapping $f$:
\be 
C(X, f(X)) := H(X)-H(X|f(X))\,.
\ee
By \eqref{eq:condentropy2gen}, this can be simplified to
\be \label{eq:singularMI}
C(X,f(X))= H(f(X)) -  \E\big[\log \big( J_f(X) \big)\big]  \,
\ee
yielding our definition of CDI.

\subsection{Proof of CDI Data Processing Inequality}
\label{supp:CDIProof}

\textbf{CDI Data Processing Inequality} (Theorem \ref{thm:dpi})

For Lipschitz continuous functions $f$ and $g$ with the same output space,
\[
C(X, f(X)) \geq C(X, g(f(X))
\]
with equality if and only if $g$ is one-to-one almost everywhere.

\begin{proof}
We calculate the difference between $C(X,f(X))$ and $C(X,g(f(X)))$.
\begin{align}
C(X,f(X))& -C(X,g(f(X)))\\
& = H(f(X)) - \E_X\big[\log J_f (X)\big] 
- H(g(f(X))) + \E_X\big[\log J_{g\circ f} (X)\big] 
\notag \\
& = H(f(X)) - H(g(f(X))) 
+ \E_X\bigg[\log \frac{J_g (f(X)) J_f (X)}{ J_f (X)}\bigg] \label{eq:diffgbij}\\
& = -\E_X[\log p_{f(X)}(f(X))]  
+ \E_X\bigg[\log \bigg( \sum_{z\in g^{-1}(g(f(X)))}\frac{p_{f(X)}(f(z))}{J_g(f(z))}\bigg)\bigg] 
+ \E_X [\log  J_g (f(X)) ] \label{eq:densofgf}\\
& = \E_X\left[ \log \left( \frac{\sum_{z\in g^{-1}(g(f(X)))} \frac{p_{f(X)}(f(z))}{J_g(f(z))} }{   \frac{p_{f(X)}(f(X))}{J_g(f(X))} } \right) \right]
\label{eq:dataproc}
\end{align}
where \eqref{eq:diffgbij} holds because the Jacobian determinant $J_{g\circ f}$ can be decomposed as $g$ has the same domain and codomain and \eqref{eq:densofgf} holds because the probability density function of $g(f(X))$ can be calculated as $p_{g(f(X))}(z)= \sum_{z\in g^{-1}(g(f(X)))}\frac{p_{f(X)}(f(z))}{J_g(f(z))}$ using a change of variables argument.
The resulting term in \eqref{eq:dataproc} is clearly always nonnegative which proves the inequality.

To prove the equality statement, we first assume that \eqref{eq:dataproc} is zero.
In this case, $\sum_{z\in g^{-1}(g(f(x)))} \frac{p_{f(X)}(f(z))}{J_g(f(z))}=\frac{p_{f(X)}(f(x))}{J_g(f(x))}$ almost everywhere.
Of course, we also have that $p_{f(X)}(f(x))>0$ almost everywhere.
Thus, there exists a set $\mathcal{A}$ of probability one such that $\sum_{z\in g^{-1}(g(f(x)))} \frac{p_{f(X)}(f(z))}{J_g(f(z))}=\frac{p_{f(X)}(f(x))}{J_g(f(x))}$ and $p_{f(X)}(f(x))>0$ for all $x\in \mathcal{A}$. 
In particular, the set $g^{-1}(g(f(x)))\cap \mathcal{A} =\{f(x)\}$ and hence $g$ is one-to-one almost everywhere.

For the other direction, assume that there exists $\tilde{g}$ such that $\tilde{g}(g(f(x)))=f(x)$ almost everywhere.
We can assume without loss of generality that $p_{f(X)}(f(x))=0$ for all $x$ that do not satisfy this equation. 
Restricting the expectation in \eqref{eq:dataproc} to the values that satisfy $\tilde{g}(g(f(x)))=f(x)$ does not change the expectation and gives the value zero. 
\end{proof}

\subsection{Theorem \ref{thm:DiscreteIBMSS} Only Holds in the Reverse Direction for Continuous $X$}
\label{supp:IBReverseDirection}

The specific claim we are making is as follows:
\begin{theorem}
Let $X$ be a continuous random variable drawn according to a distribution $p(X|Y)$ determined by the discrete random variable $Y$.  
Let $\mathcal{F}$ be the set of measurable functions of $X$ to any target space.  
If $f(X)$ is a minimal sufficient statistic of $X$ for $Y$ then
\begin{align}
f \in &\arg\min_{S \in \mathcal{F}}  \ I(X,S(X)) \notag\\
&s.t. \ \ I(S(X), Y) = \max_{S' \in \mathcal{F}} I(S'(X), Y).
\label{eq:notsufcond}
\end{align}
However, there may exist a function $f$ satisfying \eqref{eq:notsufcond} such that $f(X)$ is not a minimal sufficient statistic.
\end{theorem}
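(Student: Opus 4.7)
The proof splits into the forward implication and an explicit counterexample for the reverse direction. For the forward direction, assuming $f(X)$ is a minimal sufficient statistic of $X$ for $Y$, the constraint in \eqref{eq:notsufcond} holds immediately by Lemma \ref{lem:SuffStat}, so the only work is to show that $f$ minimizes $I(X, S(X))$ over sufficient $S \in \mathcal{F}$. I would invoke the classical characterization of minimality (Definition \ref{def:tradMSS}): for any sufficient $S$, there exists a measurable $g$ with $f = g \circ S$ almost everywhere. The ordinary data processing inequality for mutual information then gives $I(X, f(X)) = I(X, g(S(X))) \leq I(X, S(X))$, placing $f$ in the argmin. This step is essentially mechanical.

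For the reverse direction, the plan is to engineer a setting where every sufficient statistic has $I(X, S(X)) = \infty$, so that the argmin degenerates to the entire set of sufficient statistics and in particular contains non-minimal ones. Concretely, I would take $Y \sim \mathrm{Bernoulli}(1/2)$, let $X \mid Y = 0$ be uniform on $[0, 1]$, and let $X \mid Y = 1$ have the triangular density $p_1(x) = 4x$ on $[0, 1/2]$ and $p_1(x) = 4(1 - x)$ on $[1/2, 1]$. Both class-conditional densities are symmetric about $1/2$, so $p(Y \mid X = x) = p(Y \mid X = 1 - x)$, and a minimal sufficient statistic is $f_0(X) = |X - 1/2|$. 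The identity $T(X) = X$ is trivially sufficient, but the non-invertible measurable map $g(x) = |x - 1/2|$ sends $T(X)$ to the still-sufficient $f_0(X)$, so by Definition \ref{def:betterMSS} the identity is not minimal.

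To conclude that $T$ nevertheless lies in the argmin, I would apply Lemma \ref{InfiniteMI} to both $T$ and $f_0$: each is a continuous, non-constant function of the continuous variable $X$, so $I(X, T(X)) = I(X, f_0(X)) = \infty$. By the ordinary DPI, $I(X, S(X)) \geq I(X, f_0(X)) = \infty$ for every sufficient $S$, so the infimum in the optimization problem is $\infty$ and is attained by every sufficient $S$; in particular $T = \mathrm{id}$ satisfies \eqref{eq:notsufcond} without being a minimal sufficient statistic. The main point to verify carefully is that $f_0 = |X - 1/2|$ really is minimal (not just sufficient), which reduces to checking that the likelihood ratio $p_1(x)/p_0(x) = 4x$ is injective on $[0, 1/2]$; the triangular choice makes this immediate. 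No deeper obstacle is anticipated.
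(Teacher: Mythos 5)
Your proof is correct and follows the same skeleton as the paper's: the forward direction combines the classical characterization of minimality (Definition \ref{def:tradMSS}) with the ordinary data processing inequality, and the reverse direction exploits the fact that $I(X,S(X))=\infty$ for every deterministic sufficient statistic of a continuous $X$ (Lemma \ref{InfiniteMI}), so the argmin degenerates to the set of \emph{all} sufficient statistics. The only substantive difference is the counterexample. The paper uses the textbook Gaussian example $X=(X_1,X_2)\sim\mathcal{N}(\mu,1)$ with $T=\tfrac{X_1+X_2}{2}$ minimal and $T'=(\tfrac{X_1+X_2}{2},X_1X_2)$ sufficient but not minimal; there $Y=\mu$ is really a parameter rather than the discrete random variable posited in the theorem statement. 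Your construction --- $Y\sim\mathrm{Bernoulli}(1/2)$ with uniform and triangular class-conditionals symmetric about $1/2$, identity map sufficient but not minimal, $|X-1/2|$ minimal --- is a correct instance of the same mechanism and is arguably more faithful to the hypothesis that $Y$ is discrete. Your verification that $|X-1/2|$ is minimal (the likelihood ratio is injective as a function of $|x-1/2|$, so it induces the same partition) is sound, and the step $I(X,S(X))\geq I(X,f_0(X))=\infty$ for every sufficient $S$ via minimality of $f_0$ plus the DPI closes the argument cleanly. No gaps.
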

\begin{proof}
First, we prove the forward direction. According to Lemma~\ref{lem:SuffStat},
 $Z = f(X)$ is a sufficient statistic for $Y$ if and only if $I(Z, Y) = I(X, Y) = \max_{S'} I(S'(X), Y)$.
 To show the minimality condition in \eqref{eq:notsufcond} for a minimal sufficient statistic, assume that there exists $S(X)$ such that $I(S(X), Y) = \max_{S' \in \mathcal{F}} I(S'(X), Y)$ and $I(X,S(X))< I(X,f(X))$.
 Because $f$ is assumed to be a minimal sufficient statistic, there exists $g$ such that $f(X)=g(S(X))$ and by the data-processing inequality $I(X,S(X))\geq I(X,f(X))$, a contradiction.

Next, we give an example of a function satisfying \eqref{eq:notsufcond} such that $f(X)$ is not a minimal sufficient statistic.  The example is the case when $I(X, f(X))$ is not finite, as is the case when $f$ is a deterministic function and $X$ is continuous.  (See Lemma \ref{InfiniteMI}.) 
In this case, $I(X, S(X))$ is infinite for all deterministic, sufficient statistics $S$.  Thus the set $\arg \min_S I(X, S(X))$ contains not only the minimal sufficient statistics, but all deterministic sufficient statistics.  As a concrete example, consider two i.i.d.\ normally-distributed random variables with mean $\mu$: $X = (X_1, X_2) \sim \mathcal{N}(\mu,1)$.  $T(X) = \frac{X_1 + X_2}{2}$ is a minimal sufficient statistic for $\mu$.  $T'(X) = (\frac{X_1 + X_2}{2}, X_1 \cdot X_2)$ is a non-minimal sufficient statistic for $\mu$.  
However,  both statistics satisfy $T, T' \in \arg\min_{S \in \mathcal{F}}  \ I(X,S(X))$ since $\min_{S \in \mathcal{F}} I(X,S(X))=\infty$ under the constraint $I(S(X), Y) = \max_{S' \in \mathcal{F}} I(S'(X), Y)$.
\end{proof}

\subsection{Experiment Details}
\label{supp:ExperimentDetails}
Code to reproduce all experiments is available online at \url{https://github.com/mwcvitkovic/MASS-Learning}.

\subsubsection{Data}
In all experiments above, the models were trained on the CIFAR-10 dataset \cite{Krizhevsky2009LearningML}.  In the out-of-distribution detection experiments, the SVHN dataset \cite{Netzer2011ReadingDI} was used as the out-of-distribution dataset.  All channels in all datapoints were normalized to have zero mean and unit variance across their dataset.  No data augmentation was used in any experiments.

\subsubsection{Networks}
The SmallMLP network is a 2-hidden-layer, fully-connected network with \texttt{elu} nonlinearities \cite{clevert_fast_2015}.  The first hidden layer contains 400 hidden units; the second contains 200 hidden units.  Batch norm was applied after the linear mapping and before the nonlinearity of each hidden layer.  Dropout, when used, was applied after the nonlinearity of each hidden layer.  When used in VIB and MASS, the representation $f_\theta(x)$ was in $\R^{15}$, with the VIB encoder outputting parameters for a fully-covariant Gaussian distribution in $\R^{15}$. The marginal distribution in VIB and each component of the variational distribution $q_\phi$ (one component for each possible output class) in MASS were both mixtures of 10 full-covariance, 15-dimensional multivariate Gaussians.

The ResNet20 network is the 20-layer residual net of He et al. \yrcite{he_deep_2015}.  We adapted our implementation from \url{https://github.com/akamaster/pytorch_resnet_cifar10}, to whose authors we are very grateful.  When used in VIB and MASS, the representation $f_\theta(x)$ was in $\R^{20}$, with the VIB encoder outputting parameters for a diagonally-covariant Gaussian distribution in $\R^{20}$.  The marginal distribution in VIB and each component of the the variational distribution $q_\phi$ (one component for each possible output class) in MASS were both mixtures of 10 full-covariance, 20-dimensional multivariate Gaussians.

In experiments where a distribution $q_\phi(f_\theta(x) | y)$ is used in conjunction with a function $f_\theta$ trained by SoftmaxCE, each component of $q_\phi(f_\theta(x) | y)$ was a mixture of 10 full-covariance, 10-dimensional multivariate Gaussians, the parameters $\phi$ of which were estimated by MLE on the training set.

\subsubsection{Training}

The SmallMLP network in all experiments and with all training methods was trained using the Adam optimizer \cite{kingma_adam:_2014} with a learning rate of $0.0005$ for 100,000 steps of stochastic gradient descent, using minibatches of size 256.  All quantities we report in this paper were fully-converged to stable values by 100,000 steps.  When training VIB, 5 encoder samples per datapoint were used during training, and 10 during testing.  When training MASS, the learning rate of the parameters of the variational distribution $q_\phi$ was set at \num{2.5e-5} to aid numerical stability.

The ResNet20 network in all experiments and with all training methods was trained using SGD with an initial learning rate of $0.1$, decayed by a multiplicative factor of 0.1 at epochs 100 and 150, a momentum factor of 0.9, and minibatches of size 128.  These values were taken directly from the original paper \cite{he_deep_2015}.  However, unlike the original paper, we did not use data augmentation in order to keep the comparison between different numbers of training points more rigorous.  This, combined with the smaller number of training points used, accounts for the around 82\% accuracy we observe on CIFAR-10 compared to the around 91\% accuracy in the original paper.  We trained the network for 70,000 steps of stochastic gradient descent.  All quantities we report in this paper were fully-converged to stable values by 70,000 steps.  When training VIB, 10 encoder samples per datapoint were used during training, and 20 during testing.  When training MASS, the learning rate of the parameters of the variational distribution was the same as those of the network.

The values of $\beta$ we chose for VIB and MASS were selected so that the largest $\beta$ value used in each experiment was much larger in magnitude than the remaining terms in the VIB or MASS training loss, and the smallest $\beta$ value used was much smaller than the remaining terms.  We made this choice in the hope of clearly observing the effect of the $\beta$ parameter and more fairly comparing SoftmaxCE, VIB, and MASS.  But we note that a finer-tuning of the $\beta$ parameter would likely result in better performance for both VIB and MASS.  We also note that the reason we omit a $\beta=0$ run for VIB with the SmallMLP network was that we could not prevent training from failing due to numerical instability with $\beta=0$ with this network.

\end{document}